\newcommand{\vv}{\mathbf{v}}
  \newcommand{\citep}{\parencite}
  \newcommand{\citet}{\textcite}
  \newlength{\defbaselineskip}
\newtheorem{dfn}{Definition}
\newcommand{\xmark}{\ding{55}}%
\definecolor{c1}{HTML}{586770}
\definecolor{c4}{HTML}{2a4a67}
\definecolor{c3}{HTML}{6d2a58}
\definecolor{c2}{HTML}{34142a}
\definecolor{myblue}{HTML}{FDF5E0} 
\definecolor{mygray}{HTML}{DBE2E9} 
\definecolor{mygreen}{HTML}{E6F3FC}
\definecolor{dark2orange}{rgb}{0.9, 0.4, 0.}
\definecolor{dark2purple}{rgb}{0.4, 0.4, 0.8}
\newcommand{\undermath}[2]{\underset{#1}{\underbrace{#2}}}
\newcommand{\R}[0]{\mathbb{R}}
\newcommand{\inner}[2]{\langle #1, #2 \rangle}
\newcommand{\M}[0]{\mathcal{M}}
\newcommand{\head}[1]{\vspace{1.7mm}\noindent{{\textcolor{c4}{\bf #1.}}}}
\newcommand{\model}[0]{\textsc{Atlas}}
\newcommand{\omodel}[0]{\textsc{OmegaNet}}
\newcommand{\learningrule}[0]{Omega}
\newcommand{\mb}[1]{\mathbf{#1}}
\newcommand{\SSS}{\mathcal{S}}
\newcommand{\cL}{\mathcal{L}}
\newcommand{\rank}{\operatorname{rank}}
\newcommand{\vk}{\mathbf{k}}
\newcommand{\vq}{\mathbf{q}}
\renewcommand{\vv}{\mathbf{v}}
\newcommand{\vx}{\mathbf{x}}
\newtcolorbox{c4box}{boxrule=1pt, colback=c4!5!white,colframe=c4!50!white}
\newtcolorbox{c3box}{boxrule=1pt, colback=c3!5!white,colframe=c3!50!white}
\newtcolorbox{c2box}{boxrule=1pt, colback=c2!5!white,colframe=c2!50!white}
\newtcolorbox{c1box}{boxrule=1pt, colback=c1!5!white,colframe=c1!50!white}
\newcommand\blfootnote[1]{%
  \begingroup
  \renewcommand\thefootnote{}\footnote{#1}%
  \addtocounter{footnote}{-1}%
  \endgroup
}
\title{\model: Learning to Optimally Memorize the Context at Test Time}
\author{Ali Behrouz}
\author{Zeman Li}
\author{Praneeth Kacham}
\author{Majid Daliri}
\author{Yuan Deng}
\author{Peilin Zhong}
\author{\\Meisam Razaviyayn} 
\author{Vahab Mirrokni \protect \blfootnote{\texttt{\{alibehrouz, zemanli, pkacham, dengyuan, peilinz, razaviyayn, mirrokni\}@google.com}, and majiddl.2099@gmail.com}}
\affil{\protect\includegraphics[width=40mm]{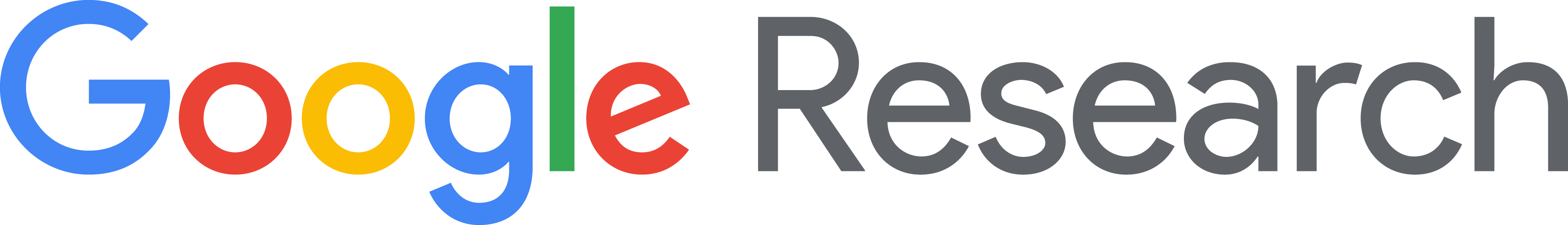}} 
\date{}
\begin{document}

\vspace{-20ex}
\maketitle

\begin{abstract}
 Transformers have been established as the most popular backbones in sequence modeling, mainly due to their effectiveness in in-context retrieval tasks and the ability to learn at scale. Their quadratic memory and time complexity, however, bound their applicability in longer sequences and so has motivated researchers to explore effective alternative architectures such as modern recurrent neural networks (a.k.a long-term recurrent memory module). Despite their recent success in diverse downstream tasks, they struggle in tasks that requires long context understanding and extrapolation to longer sequences. We observe that these shortcomings come from three disjoint aspects in their design: (1) limited memory capacity that is bounded by the architecture of memory and feature mapping of the input; (2) online nature of update, i.e., optimizing the memory only with respect to the last input; and (3) less expressive management of their fixed-size memory. To enhance all these three aspects, we present \model, a long-term memory module with high capacity that learns to memorize the \emph{context} by optimizing the memory based on the current and past tokens, overcoming the online nature of long-term memory models. Building on this insight, we present a new family of Transformer-like architectures, called \textsc{DeepTransformers}, that are strict generalizations of the original Transformer architecture. Our experimental results on language modeling, common-sense reasoning, recall-intensive, and long-context understanding tasks show that \model{} surpasses the performance of Transformers and recent linear recurrent models. \model{} further improves the long context performance of Titans, achieving +80\% accuracy in 10M context length of BABILong benchmark. 
%  We further evaluate the effect of our solutions on the memory capacity and memory management of recurrent models, providing insights on the potential reasons for their limited effective context length. 
\end{abstract}

\section{Introduction}\label{sec:intro}
The attention module~\citep{bahdanau2014neural} is a critical building block in modern deep learning architectures~\citep{transformers, achiam2023gpt, behrouz2024titans, team2025gemma}, excelling due to its scalability and performance in in-context retrieval tasks. In principle, attention functions as an associative memory, computing direct pairwise token dependencies to store key-value mappings and retrieve them via query-key similarities. Computing this  pairwise dependencies, however, while accurate, causes quadratic space and time complexity, limiting their applicability in long context understanding, memorization, or modeling~\citep{liu2024lost, li2024survey, dalal2025one}.

% Attention module~\citep{bahdanau2014neural} has been firmly established as the critical building block of state-of-the-art deep learning architectures~\citep{transformers, achiam2023gpt, behrouz2024titans, team2025gemma}, mainly due to their ability to learn at scale and their outstanding performance in in-context retrieval tasks. In principle, attentions function as an associative memory module, where they compute direct pairwise dependencies of tokens to learn how to store key-value mappings and how to retrieve them based on the pairwise similarity between queries (i.e., search signals) and keys (i.e., contexts). Computing this direct pairwise dependencies of tokens, however, causes quadratic space and time complexity, limiting their applicability to tasks that require long context understanding, memorizing, or modeling~\citep{liu2024lost, li2024survey, dalal2025one}.

Recent research efforts aim to overcome the limitations of Transformers—i.e., pure attention-based architectures—in long-context modeling by designing more efficient yet effective recurrent neural networks~\citep{schlag2021linear, behrouz2024titans, peng2025rwkv7}. These modern recurrent architectures can be unified as associative memory modules optimizing an internal objective termed 'attentional bias'~\citep{behrouz2025Miras}. Unlike Transformers' growing KV cache, these models use fixed-size memory, necessitating improved memory management. Consequently, there's growing interest in enhancing RNN memory management through more effective: (i) Learning rules, from additive learning~\citep{katharopoulos2020transformers} to DeltaNet's Delta rule~\citep{schlag2021linear}; (ii) Forget (Retention) Gates, from RetNet's input-independent gating~\citep{sun2023retentive} to adaptive gating in Titans~\citep{behrouz2024titans} and RWKV-7~\citep{peng2025rwkv7}; and (iii) Memory Architectures, from vector-valued memory~\citep{sun2023retentive, peng2023rwkv} to neural deep memory modules~\citep{behrouz2024titans, sun2024learning}.

Despite the success of these improved models in a diverse set of downstream benchmarks, they often struggle with long context understanding, in-context retrieval, and extrapolation to longer sequences~\citep{wen2024rnns, behrouz2024titans, arora2024simple, yang2024gated}. We observe these shortcomings arise from three design aspects: (1) The online nature of their memory update, where memory is optimized based on the current token while retaining past memory state, leading to memorization of individual tokens without considering broader context; (2) The limited capacity of memory, where architecture and key-value feature mappings restrict the number of perfectly mappable key-value pairs; and (3) The expressiveness of memory management (i.e., the internal objective's optimizer), as most recent models use gradient descent that relies on the first-order information about the dynamics of tokens, causing the memory to converge to spurious local minima and learn less effective key-value mappings.

\begin{table*}
    \centering
    \caption{A summary of the recent modern recurrent neural networks. We compare these architectures based on five characteristics: (1) Dynamic decay; (2) Deep neural memory; (3) non-linear memory capacity; (4) Locally optimal: managing memory by (approximating) the second-order information about tokens; (5) Flexible context: the ability to flexibly memorize the \underline{context}. $\phi(\cdot)$ and $\phi^{*}(\cdot)$ represent polynomial and infinite-dimensional feature mappings (see \autoref{eq:infi-map}).}
    \hspace*{-3ex}
    \resizebox{1.05\linewidth}{!}{
    \begin{tabular}{l l c c c c c c l}
    \toprule
         \multirow{2}{*}{Model} & \multirow{2}{*}{Attentional Bias  $\ell(\cdot; \cdot)$} & \multirow{2}{*}{Optimizer} & \multirow{1}{*}{Dynamic} & \multirow{1}{*}{Deep} & Non-linear & \multirow{1}{*}{Locally} & \multirow{1}{*}{Flexible}  & \multicolumn{1}{c}{\multirow{2}{*}{Memory Write Operation}} \\
          &  &  & Decay & Memory & Capacity$^{\dagger}$ & Optimal & Context  &  \\
         \midrule
         \midrule
         Attention & $\sum_{t = 1}^{L} a_t \| \M \vk_t - \vv_t \|^2_2$ & NP$^\ddagger$& \xmark & \xmark & \checkmark & \checkmark & \xmark & $\M_t = \M_{t-1} \cup \{(\vk_t, \vv_t)\}$\\
         SWA  & $\sum_{t = c}^{L} a_t \| \M \vk_t - \vv_t \|^2_2$ & NP & \xmark & \xmark & \checkmark & \checkmark & \checkmark  & $\M_{t} = (\M_{t-1} \setminus \{(\vk_c, \vv_c)\}) \cup \{(\vk_t, \vv_t)\}$\\
         \midrule
         Linear Attention &  $\inner{\M_t \vk_t}{\vv_t}$ & GD & \xmark & \xmark & \xmark & \xmark & \xmark & \multicolumn{1}{l}{$\M_t =    \M_{t-1} +  {\vv_t  \vk_t^\top}$}\\
         RetNet &  $\inner{\M_t \vk_t}{\vv_t}$ & GD & \xmark & \xmark & \xmark & \xmark & \xmark  & \multicolumn{1}{l}{$\M_t =   \alpha \M_{t-1} +  {\vv_t  \vk_t^\top}$}\\
         GLA & $\inner{\M_t \vk_t}{\vv_t}$ & GD & \checkmark & \xmark  & \xmark & \xmark & \xmark  & $\M_t = \text{Diag}(\alpha_t )  \M_{t-1}   +  \vv_t\vk_t^\top$\\
         PolySketchFor. & $\inner{\M_t \vk^p_t}{\vv_t}$ & GD & \xmark & \xmark  & \checkmark & \xmark & \xmark & $\M_t =  \M_{t-1}   +  \vv_t (\vk_t^\top)^p$\\
         TTT      & $\| \M_t(\vk_t) - \vv_t \|^2_2$ & GD & \xmark & \checkmark & \xmark & \xmark & \xmark & $\M_t = \M_{t-1} -\eta \nabla \ell(\M_{t-1}; \vk_t, \vv_t)$\\
         DeltaNet & $\| \M_t \vk_t - \vv_t \|^2_2$ & GD & \xmark & \xmark & \xmark & \xmark & \xmark & $\M_t =  (\mathbf{I} - \beta_t \vk_t \vk_t^\top) \M_{t-1} + \beta_t \vv_t  \vk_t^\top$\\
         Longhorn & $\| \M_t \vk_t - \vv_t \|^2_2$ & Implicit GD & \xmark & \xmark & \xmark & \xmark & \xmark & $\M_t =  \left(\mathbf{I}- \delta_t \vk_t \vk^\top \right) \M_{t-1} + \left(\delta_t \odot \vv_t\right) \vk_t$ $^\S$\\
         Gated DeltaNet & $\| \M_t \vk_t - \vv_t \|^2_2$ & GD & \checkmark & \xmark & \xmark & \xmark & \xmark & $\M_t =   \alpha_t (\mathbf{I} - \beta_t \vk_t \vk_t^\top) \M_{t-1} + \beta_t \vv_t  \vk_t^\top$\\
        %  DeltaProduct & $\| \M_t \vk_t - \vv_t \|^2_2$ & MGD$^\S$ &  \checkmark & \xmark & \xmark & \xmark & \xmark & $\M_t =   \left(\alpha_t \prod_{i = 1}^{n} (\mathbf{I} - \beta_{t, i} \vk_{t, i} \vk_{t, i}^\top)\right) \M_{t-1} + \sum_{j = 1}^{n} \prod_{i = j}^{n} (\mathbf{I} - \beta_{t, i} \vv_{j, i} \vk_{j, i}^\top)$\\
         RWKV-7 & $\| \M_t \vk_t - \vv_t \|^2_2$ & GD & \checkmark & \xmark & \xmark & \xmark & \xmark  & $\M_t =    (\text{diag}(\alpha_t) - \beta_t \vk_t \vk_t^\top) \M_{t-1}\! + \beta_t \vv_t  \vk_t^\top$\\
         \vspace{-6pt}\\
         \multirow{2}{*}{Titans} & \multirow{2}{*}{$\| \M_t(\vk_t) - \vv_t \|^2_2$} & \multirow{2}{*}{GD w/ M.$^*$} & \multirow{2}{*}{\checkmark} & \multirow{2}{*}{\checkmark} & \multirow{2}{*}{\xmark} & \multirow{2}{*}{\xmark} & \multirow{2}{*}{\xmark} & $\M_t = \alpha_t \M_{t-1} + \mathcal{S}_t$\\
         & &  & & & & & & $\mathcal{S}_t = \eta_t \mathcal{S}_{t-1} - \theta_t \nabla \ell(\M_{t-1}; \vk_t, \vv_t)$\\
         \vspace{-6pt}\\
         Titans-- & $\| \M_t(\vk_t) - \vv_t \|^2_2$ & GD & \checkmark & \checkmark & \xmark & \xmark & \xmark & $\M_t = \alpha_t \M_{t-1} -\eta_t \nabla \ell(\M_{t-1}; \vk_t, \vv_t)$\\
         \textsc{Moneta} & $\| \M_t(\vk_t) - \vv_t \|^p_p$ & GD & \checkmark & \checkmark & \checkmark & \xmark & \xmark & $\M_t = \alpha_t \M_{t-1} - \eta_t \nabla \ell(\M_{i-1};\vk_t, \vv_t)$ \\
         \textsc{Memora} & $\| \M_t(\vk_t) - \vv_t \|^2_2$ & GD & \checkmark & \checkmark & \xmark & \xmark & \xmark  & $\M_t = \sigma\left( \alpha_t \log(\M_{t-1})  - \eta_t  \nabla \ell(\M_{t-1};\vk_t, \vv_t)\right)$\\
         \midrule
         \multicolumn{9}{c}{\textcolor{black}{\textbf{Our Models}}}\\
         \midrule 
         DLA & $\inner{\M_t(\phi(\vk_t))}{\vv_t}$ & GD & \checkmark & \checkmark & \xmark & \xmark & \xmark &  $\M_t = \alpha_t \M_{t-1} -\eta_t \nabla \ell(\M_{t-1}; \vk_t, \vv_t)$ \\
         DeepTransformer$^\divideontimes$ & $\inner{\M_t(\phi^{*}(\vk_t))}{\vv_t}$ & GD & \checkmark & \checkmark & \checkmark & \xmark & \xmark &  $\M_t = \alpha_t \M_{t-1} -\eta_t \nabla \ell(\M_{t-1}; \vk_t, \vv_t)$ \\
         SWDT & $\sum_{i = c}^{L} \inner{\M_t(\phi^{*}(\vk_i))}{\vv_i}$ & GD & \checkmark & \checkmark & \checkmark & \xmark & \checkmark & $\M_t = \alpha_t \M_{t-1} -\eta_t \nabla \ell(\M_{t-1}; \vk_t, \vv_t)$ \\
         \learningrule Net & $\sum_{i = c}^{L} \gamma_i \left\| \M_t(\phi(\vk_i)) - \vv_i \right\|^2_2$ & GD & \checkmark & \checkmark & \checkmark & \xmark & \checkmark & $\M_t = \alpha_t \M_{t-1} -\eta_t \nabla \ell(\M_{t-1}; \vk_t, \vv_t)$\\ 
         \textsc{Dot}$^\divideontimes$ & $\sum_{i = c}^{L} \gamma_i \left\| \M_t(\phi^{*}(\vk_i)) - \vv_i \right\|^2_2$ & GD & \checkmark & \checkmark & \checkmark & \xmark & \checkmark & $\M_t = \alpha_t \M_{t-1} -\eta_t \nabla \ell(\M_{t-1}; \vk_t, \vv_t)$\\ 
          \vspace{-6pt}\\
         \multirow{2}{*}{\model} & \multirow{2}{*}{$\sum_{i = c}^{L} \gamma_i \left\| \M_t(\phi(\vk_i)) - \vv_i \right\|^2_2$} & \multirow{2}{*}{Muon} & \multirow{2}{*}{\checkmark} & \multirow{2}{*}{\checkmark} & \multirow{2}{*}{\checkmark} & \multirow{2}{*}{\checkmark} & \multirow{2}{*}{\checkmark} & $\M_t = \alpha_t \M_{t-1} - \eta_t \:\: \texttt{NS-5}(\SSS_t)$ \\
         & & & & & & & & $\SSS_t = \theta_t \SSS_{t-1} - \nabla \ell(\M_{t-1}; \vk_t, \vv_t)$ \\
    \toprule 
     \multicolumn{9}{l}{$^\dagger$ The matrix-valued memory version is considered. \qquad $^\ddagger$ NP: Nonparametric \qquad $^\S$ $\delta_t = \frac{\beta_t}{\mathbf{1}+ \beta_t \vk_t^\top \vk_t}$ . \qquad $^*$ Gradient Descent with Momentum. \qquad $^\divideontimes$ Without Normalization.}\\
    \end{tabular}
    }
    \label{tab:my_label}
\end{table*}

\subsection*{Memory Perspective}
Associative memory—the ability to map different entities or events—is an inseparable component of learning in humans~\citep{terry2017learning} and so has motivated several recent studies to understand the state-of-the-art deep learning architectures through its lens~\citep{ramsauer2021hopfield, behrouz2024titans, behrouz2025Miras, wang2025test}. In this perspective, memory is defined as a neural update caused by an input; the more surprising the input is, the more it affects the memory and so is memorable. Therefore, finding an effective ``surprise metric'' is a critical step towards designing such memory modules. As earlier discussed by \citet{behrouz2025Miras, behrouz2024titans}, almost all existing architectures use a surprise metric that updates the memory based on the current input. An event (as a sequence of tokens), however, might not consistently be surprising through a long-period of time although it is memorable. To overcome this issue, \citet{behrouz2024titans} suggest breaking the surprise metric into two parts of ``momentary'' and ``past'' surprise, incorporating the \emph{cumulative} surprise of past inputs when updating the memory with respect to the current input. This design, however, can miss the \emph{context} by memorizing individual tokens. To this end, in this work, we present a long-term neural memory module that measures the surprise of a local (or global) context window, meaning that it learns how to memorize the (\st{token}) context at test time.

Through the paper, we use the terminology ``Test Time Memorization'' because the process involves storing and retrieving information strictly within the global context, without updating the model's core learned parameters (i.e., outer-loop) or initial states from pre-training. Typically, no persistent learning or skill acquisition carries over to new, independent global context once the memory is cleared. Thus, we prefer the use of "test time memorization" over~using~"test~time~training".

\subsection*{Contributions}
In this paper, we aim to overcome the abovementioned limitations—i.e., (1) online nature, (2) limited memory capacity, and (3) less expressive memory management—by designing a long-term neural memory module with high capacity and the ability to memorize the context, instead of tokens. We further build upon these insights and present a family of strictly more powerful Transformers. More specifically:

\head{Better Understanding of Memory Capacity and its Bottleneck}
To improve the limited memory capacity, we suggest using higher-order feature mappings (e.g., polynomial feature kernels) on input tokens. We provide theoretical justifications on why deeper memory modules and/or higher-order feature mapping can enhance memory capacity—i.e., the maximum number of linearly independent key-value associations the memory can perfectly map.

\head{New Expressive Learning Rule}
To overcome the online nature of recent recurrent models, this work presents a sliding window update rule, called \learningrule{} rule, that optimizes and updates memory based on all past tokens in a given context window, not just the last. This allows the model to better manage its fixed-size memory and memorize a local context instead of individual tokens. 

\head{Strict Generalization of Transformers} Next, we show how our \learningrule{} rule formulation connects to global and local softmax attentions (i.e., Sliding Window Attention - SWA) and present a new family of Transformer-like architectures, called \textsc{DeepTransformers} and its sliding window variants SWDT, that strictly generalize Transformers~\citep{transformers}. We further present a novel baseline of Deep Linear Attention (DLA) to demonstrate the role of deep memory.

\head{New Memory Modules with Better Memory Management} 
Building upon the above improvements, we present \omodel{}, a new architecture using polynomial features on its keys and queries, while updating its memory based on  \learningrule{} and gradient descent. To further enhance memory management, we introduce \model{}, which leverages the popular Muon optimizer~\citep{jordan2024muon} for updating the internal memory. We show that both \omodel~and \model{} can take advantage of parallelizable training algorithms, resulting in fast training without substantial overhead compared to the online version (i.e., context window = 1). To the best of our knowledge, \model{} is the first parallelizable recurrent architecture that optimizes the memory using the (approximation) of second-order information (i.e., has locally optimal memory module).

\head{Improvement on Diverse Downstream Tasks} 
Extensive experiments validate our model designs and proposed techniques, including ablations of modern architectures. We evaluated \textsc{DeepTransformer}s, \omodel{}, and \model{} on diverse benchmarks—language modeling, common-sense reasoning, recall-intensive, and needle-in-haystack tasks—where they outperformed modern linear RNNs, local attention (SWA), and Transformers. Furthermore, we studied the effects of memory architecture, feature mapping, memory management algorithm (internal optimizer), and  \learningrule{} rule on memory module capacity and performance in long-context understanding tasks.

Proofs, additional experimental results, discussions on related work, and the details of experiments are in Appendix.
\section{Preliminaries}\label{sec:prelim}

In this section, we first discuss the notation that we use through the paper and then review the background concepts and related work. 
Additional discussion on related studies are in \autoref{app:rw}. 

\head{Notations} We let $x \in \R^{N \times d_{\text{in}}}$ be the input, $\M_t$ be the state of memory $\M$ at time $t$, $\mathbf{K}$ be the keys, $\mathbf{V}$ be the values, and $\mathbf{Q}$ be the query matrices. We use bold lowercase letters with subscript $t$ to refer to vectors correspond to time $t$ (i.e., $\vk_t, \vv_t$, and $\vq_t$). Following  \citet{behrouz2025Miras}, we use $\ell(\M_t; \vk_t, \vv_t)$ to refer to the attentional bias (i.e., the internal memory objective). Through the paper, we use simple MLPs with $\mathcal{L}_{\M} \geq 1$ layers and residual connection as the architecture of the memory module $\M(\cdot)$. Notably, despite this choice, all of our model formulations are simply adaptable to other memory architecture choices; e.g., linear matrix-valued memory ($\mathcal{L}_{\M} = 1$). When it is needed, we parameterized the memory module with $\boldsymbol{\theta}_{\M} := \{W_1, \dots, W_{\mathcal{L}_{\M}}, \dots \}$, which at least includes the parameters of linear layers in the MLP.

\subsection{Backgrounds}

\head{Attention}
Attention is a critical component of Transformers that acts as their associative memory~\citep{bietti2023birth, sun2024learning, behrouz2025Miras}. Given input $x \in \R^{N \times d_{\text{in}}}$, causal attention computes output $\mathbf{y} \in \R^{N \times d_{\text{in}}}$ over input dependent key, value, and query matrices $\mathbf{Q} = x \mathbf{W}_{\mathbf{Q}}, \mathbf{K} = x \mathbf{W}_{\mathbf{K}}, \: \text{and}\:\: \mathbf{V} = x \mathbf{W}_{\mathbf{V}}$ as:
\begin{align}\label{eq:attention}
    &\mathbf{y}_i = \sum_{j = 1}^{i} \frac{ \exp\left( \mathbf{q}_i^{\top} \mathbf{k}_j/\sqrt{d_{\text{in}}}\right) \mathbf{v}_j }{\sum_{\ell = 1}^{i} \exp\left( \mathbf{q}_i^{\top} \mathbf{k}_{\ell}/\sqrt{d_{\text{in}}}\right)} =  \frac{1}{Z_i} \sum_{j = 1}^{i} \exp\left( \mathbf{q}_i^{\top} \mathbf{k}_j/\sqrt{d_{\text{in}}}\right) \mathbf{v}_j,
\end{align}
where $\mathbf{W}_{\mathbf{Q}}, \mathbf{W}_{\mathbf{K}},$ and $\mathbf{W}_{\mathbf{V}} \in \R^{d_{\text{in}} \times d_{\text{in}}}$ are learnable parameters, and $Z_i = {\sum_{\ell = 1}^{i} \exp\left( \mathbf{q}_i^{\top} \mathbf{k}_{\ell}/\sqrt{d_{\text{in}}}\right)}$ is the normalization term. Despite Transformers' simple parallelizable training and effectiveness in recall-intensive tasks~\citep{arora2024simple}, their generation process and long-context scaling are significant drawbacks, as attention requires at least
$N\times d$ operations per token to calculate the output (see \autoref{eq:attention}). Therefore, in recent years, there have been an extensive research effort to design alternative architectures. We divide and review these studies into two groups: (1) Linear shallow memory recurrent models, (2) Deep memory modules:

\head{(Linear) Recurrent Models}
Linear RNNs have recently gained attention as efficient Transformer alternatives due to their parallelizable, linear-time training and comparable performance~\citep{sun2023retentive, peng2023rwkv}. Early modern RNN variants, often based on Hebbian~\citep{hebb2005organization} or Delta~\citep{widrow1988adaptive} learning rules, compress data into vector-valued or matrix-valued memory~\citep{katharopoulos2020transformers, sun2023retentive, polysketchformer-kacham2024, liu2024longhorn, schlag2021linear, lim2024parallelizing}.
Let $\M_t \in \mathbb{R}^{d \times n}$ be the memory (where $n=1$ yields vector-valued memory), and $\vk, \vv \in \mathbb{R}^{d}$ be the keys and values (projections of input $x_t \in \mathbb{R}^{d}$)). A simple general formulation for such linear RNNs is:
\begin{align}
    \M_t = A_{t} \ast \M_{t-1} + \vv_t \vk_t^{\top}, 
\end{align}
where $\ast$ is an arbitrary associative operator and $A_t$
  is a data-(in)dependent diagonal or low-rank plus identity matrix~\citep{yang2024parallelizing}. Despite the efficient \emph{linear} recurrent nature of these models, their memory can overflow, particularly with increasing context length. Although forget gates have recently significantly improved memory management in these architectures~\citep{sun2023retentive, peng2025rwkv7}, their memory's expressivity remains bounded by its linear structure.

\head{Deep Memory Module}
To overcome the limited expressivity of memory and to enhance the \emph{effective} context length recurrent models, recent studies focus on a new line of architectures with deep memory modules~\citep{irie2021going, sun2024learning, behrouz2024titans, behrouz2025Miras}. These architectures are built on the meta-learning perspective, where the memory is a deep MLP architecture updated by gradient descent (with momentum). Recently, \citet{behrouz2025Miras} present a framework to accurately unifies popular sequence models as the instances of test time memorization. That is, sequence models are associative memory modules that aim to learn the underlying mapping between given keys and values by optimizing an internal memory objective, called attentional bias. This optimization is based on an iterative optimization algorithms such as gradient descent. More formally, associative memory is defined as:
\begin{dfn}[\citet{behrouz2025Miras}] \label{dfn:associative-memory}
Given a set of keys $\mathcal{K} \subseteq \mathbb{R}^{d_k}$ and values $\mathcal{V} \subseteq \mathbb{R}^{d_v}$, associative memory is an mapping $\M: \mathcal{K} \rightarrow \mathcal{V}$. Learning the associative memory is based on an objective $\mathcal{L}$, called \emph{Attentional Bias}, that determines the type of memory and its priorities:
\begin{align}\label{eq:attentional-bias-loss}
    \M^* = \arg\min_{\M}\quad \mathcal{L}(\M(\mathcal{K}); \mathcal{V}).
\end{align}
\end{dfn}
Optimizing this objective using an iterative algorithm (e.g., gradient descent) results in the memory update rule. Thus, the sequence model is a meta in-context learner with two  optimization levels:
\begin{enumerate}
    \item \textcolor{c4}{Inner Loop}: Where parameters of the memory module are optimized (i.e., $\boldsymbol{\theta}_{\M} = \{W_1, W_2, \dots, W_{\mathcal{L}_{\M}, \dots}\}$). In the inner optimization loop, all other parameters from the model are considered hyperparameters and are fixed and \emph{not} optimized.   
    \item \textcolor{c4}{Outer Loop}: Where all other parameters of the model are optimized, such as linear projections, MLP layers, convolutions, etc. 
\end{enumerate}
Our terminology builds on this framework. Therefore, instead of full recurrent formulations, we describe models by their: (1) memory architecture, (2) internal objective (i.e., attentional bias), and (3) memory learning algorithm (optimizer). In most cases, models use matrix-valued memory with online gradient descent; for brevity in such instances, we refer to an architecture solely by its internal memory objective. For additional discussions and examples, see \autoref{app:miras}.

\begin{figure*}
\vspace{-2ex}
    \centering
    \includegraphics[width=\linewidth]{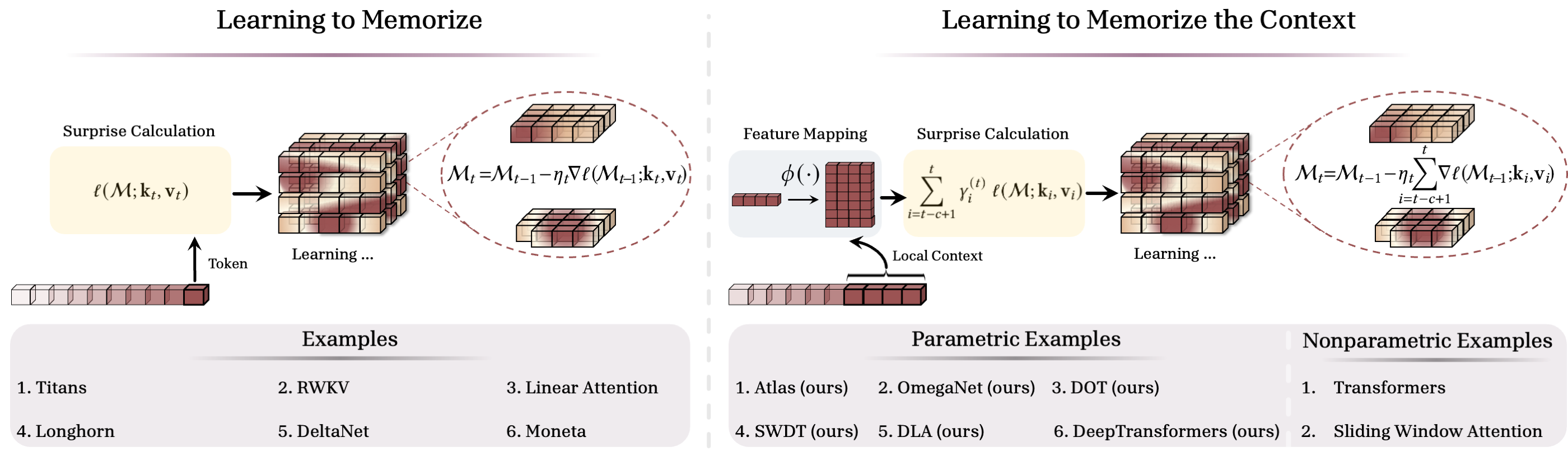}
    \caption{Comparison of learning to memorize (\textbf{Left}) individual tokens, and (\textbf{Right}) the context.}
    \vspace{-2ex}
    \label{fig:omega-rule-examples}
\end{figure*}

\section{Learning to Memorize the Context at Test Time}
Long-term associative memory, crucial for human learning~\citep{terry2017learning}, has inspired many artificial neural architectures~\citep{he2024camelot, krotov2016dense, LSTM, ramsauer2021hopfield, hopfield1982neural, behrouz2024titans, behrouz2025Miras}. While many such models use matrix- or vector-valued memory to compress past data~\citep{von2023uncovering, yang2024gated, schlag2021linear}, recent studies advocate for deep non-linear neural memory that encodes past abstractions into its parameters~\citep{sun2024learning, behrouz2024titans, behrouz2025Miras, dalal2025one}. For long-context reasoning/understanding, however, these long-term neural memory modules still require: (1) High capacity—the maximum (key, value) pairs storable in parameters (see \textcolor{c1}{\S}\ref{sec:capacity}); (2) A powerful internal memory objective (i.e., \emph{attentional bias}) to learn complex mapping between keys and values (see \textcolor{c1}{\S}\ref{sec:omega-rule}); (3) Powerful memory management for better fixed-size memory management (see \textcolor{c1}{\S}\ref{sec:omega-rule}); and (4) An efficient parallel training process for large-scale training on modern accelerators (see~\textcolor{c1}{\S}\ref{sec:parallelization}).

This section further discusses these challenges and presents  \learningrule{} rule: an expressive memory update rule with direct access to tokens in a local context window, which memorizes context rather than individual tokens.

\subsection{Associative Memory with Super Linear Capacity}\label{sec:capacity}
As previously discussed, an effective long-term memory module should store past data abstractions in its parameters. However, with a fixed number of memory parameters, a key unanswered question remains: ``\emph{what is the maximum number of uncorrelated (key, value) pairs that a model can store?}'' To answer this, we start with the simplest case: matrix memory, an $\ell_2$ regression loss as the attentional bias (i.e., $\ell(\M_t; \vk_t, \vv_t) = \|\M_t(\vk_t) - \vv_t \|^2_2$), optimized by gradient descent:

\begin{restatable}[Capacity of $\ell_2$ Attentional Bias]{prop}{Capacity}\label{prop:linear-capacity}
 Let $\M$ be a matrix-valued memory with $d_v \times d_k$ parameters that optimizes the internal objective of $\ell(\M_t; \vk_t, \vv_t) = \|\M_t \vk_t - \vv_t \|^2_2$ with gradient descent. $\M$ can store the mapping of at most $\mathcal{O}\left( d_k \right)$ pairs of $(\vk_i, \vv_i)$ with linearly independent keys. 
\end{restatable}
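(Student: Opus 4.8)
The plan is to read off the structure of the gradient-descent dynamics and then reduce the capacity question to elementary linear algebra. First I would write out one GD step on the $\ell_2$ attentional bias: $\M_t = \M_{t-1} - \eta\,\nabla_{\M}\ell(\M_{t-1};\vk_t,\vv_t) = \M_{t-1} - 2\eta\,(\M_{t-1}\vk_t - \vv_t)\,\vk_t^{\top}$. The key observation is that this increment is a rank-one matrix whose row space is $\operatorname{span}\{\vk_t\}$. Telescoping over a sequence of updates that uses the keys $\vk_1,\dots,\vk_n$ (with arbitrary repetitions and orderings), every memory state reachable from an initialization $\M_0$ has the form $\M_t = \M_0 + \sum_j \mathbf{c}_j\,\vk_j^{\top}$ for some $\mathbf{c}_j \in \R^{d_v}$; equivalently, $\M_t - \M_0$ always has its rows in $\mathcal{U} := \operatorname{span}\{\vk_1,\dots,\vk_n\} \subseteq \R^{d_k}$. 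So GD confines the memory to the affine subspace $\M_0 + \{\,M : \text{rows of } M \text{ lie in } \mathcal{U}\,\}$.

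Second, I would make ``storing the mapping'' precise as perfect recall, $\M_t\vk_i = \vv_i$ for all $i \in [n]$, i.e. $\M_t K = V$ with $K = [\vk_1\,\cdots\,\vk_n] \in \R^{d_k\times n}$ and $V = [\vv_1\,\cdots\,\vv_n] \in \R^{d_v\times n}$. If the keys are linearly independent then $\rank(K) = n$, and at most $d_k$ vectors in $\R^{d_k}$ can be linearly independent, so $n \le d_k$; this already gives the claimed $\mathcal{O}(d_k)$ ceiling, and it shows the $d_v$ ``dimension'' of the memory does not help even though $\M$ has $d_v d_k$ free entries. To argue this ceiling is attained (so the true capacity is $\Theta(d_k)$), I would note that for $n \le d_k$ the matrix $\M_0 + (V - \M_0 K)\,K^{+}$ is a zero-loss solution whose row-space offset lies in $\mathcal{U}$, hence it is inside the GD-reachable affine subspace; since the batch objective $\sum_i \|\M\vk_i - \vv_i\|^2$ is a convex quadratic, gradient descent with an appropriate step size converges to a global minimizer, which has zero loss. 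Thus $d_k$ linearly independent pairs are perfectly storable, and no more.

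The part that needs care is the precise reading of ``stores $\dots$ with gradient descent.'' The cleanest and most robust statement concerns reachability of a zero-loss state within the affine subspace $\M_0 + \mathcal{U}$, and for the $\mathcal{O}(d_k)$ upper bound one does not even need a convergence argument: it follows purely from linear independence together with the row-space confinement. For the matching lower bound one does need a standard GD-on-convex-quadratic convergence statement, and one may have to allow multiple passes over the pairs or a non-vanishing learning rate rather than the strictly online single-pass recurrence. I would also flag that linear independence of the keys is doing real work here: for correlated keys $\M_t K = V$ can be infeasible outright and the analysis changes, which is exactly the regime this proposition is meant to contrast with the polynomial-feature (\omodel) and deep-memory variants that lift the keys into a higher-dimensional space and thereby push the capacity past $d_k$.
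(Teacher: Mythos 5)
Your proposal is correct and takes essentially the same route as the paper's proof: reduce exact storage to the linear system $\M K = V$, obtain the $m \le d_k$ ceiling from linear independence of the keys (i.e., the rank of $K \in \R^{d_k \times m}$), and establish tightness via a pseudoinverse interpolating solution together with convergence of gradient descent on the convex quadratic objective. Your explicit row-space-confinement argument for the GD-reachable set, and the interpolant $\M_0 + (V - \M_0 K)K^{+}$, are a slightly more careful rendering of what the paper handles by a vectorized rank-counting step and by citing the implicit bias of gradient descent toward the minimum-norm solution, but the underlying argument is the same.
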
 
The above proposition indicates that matrix-valued memory with delta update rule has sub-linear capacity with respect to its number of parameters. This means that the number of independent patterns that can be stored in a fixed-size memory with size $M$ is strictly less than $c \times M$, for some $c \in \mathbb{R}^{+}$. Recent recurrent models suggest using deep memory modules to store the abstraction of the past into the parameters of a deep neural network~\citep{irie2021going, sun2024learning, behrouz2024titans, behrouz2025Miras}. While these deep memory architectures can intuitively enhance the expressive power in modeling complex underlying mapping patterns between keys and values, it is still unclear that if they enhance the memory capacity.  

\begin{restatable}[Effect of Deep Memory]{theorem}{DeepMemory}\label{thm:deep-memory}
 Let $\M(\cdot)$ be an MLP with $\mathcal{L}_{\M} \geq 2$ layers, $d_k$  input dimension, and $d_h$  hidden dimension. Then, $\M(\cdot)$ can store the mapping of at least $\mathcal{O}\left( d_k d_v \right)$ and at most $\mathcal{O}\left( d_k d_v \sum_{i = 1}^{\mathcal{L}_{\M}} \min \{d_h^{(j)}\}_{j \geq i} d_h^{(j+1)} \right)$ pairs of $(\vk_i, \vv_i)$ with linearly independent keys.  
\end{restatable}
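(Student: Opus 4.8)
The plan is to establish the lower and upper bounds separately, since they rely on quite different ideas. For the lower bound $\Omega(d_k d_v)$, I would exploit only the \emph{last} layer of the MLP together with one fixed nonlinear hidden representation. Concretely, fix all layers $W_1,\dots,W_{\mathcal{L}_\M-1}$ generically, so that the hidden features $h(\vk_i) := \sigma(W_{\mathcal{L}_\M-1}\cdots\sigma(W_1\vk_i)) \in \R^{d_h}$ remain linearly independent whenever the $\vk_i$ are (a generic-position argument; linearly independent keys map to linearly independent hidden vectors after a generic affine-then-nonlinear map, using that $\sigma$ is a diffeomorphism on a neighborhood or at least injective on the relevant range). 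Then the map $\vk_i \mapsto \vv_i$ is realized by choosing the last weight matrix $W_{\mathcal{L}_\M} \in \R^{d_v \times d_h}$ to solve $W_{\mathcal{L}_\M} h(\vk_i) = \vv_i$; this is a linear system with $d_v d_h$ unknowns, and as in \autoref{prop:linear-capacity} it admits an exact solution for up to $\Omega(d_h) \ge \Omega(d_k)$ independent hidden keys — but crucially each such key carries a $d_v$-dimensional value, and by stacking we can in fact interpolate $\Omega(d_h d_v / d_v) $... more carefully: the residual-connection MLP lets us treat the problem as fitting $N$ constraints in $\R^{d_v}$ each, i.e. $N d_v$ scalar equations, against $d_v d_h$ free parameters in $W_{\mathcal{L}_\M}$ alone, giving $N = \Omega(d_h) = \Omega(d_k)$ when $d_h \gtrsim d_k$, and this gives the claimed $\mathcal{O}(d_k d_v)$ scaling once one also counts that values live in $\R^{d_v}$ and the earlier layers can be used to pack more independent directions. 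I would also need to check that gradient descent on the $\ell_2$ attentional bias actually \emph{reaches} such an interpolating solution (zero training loss), which follows because the loss is a sum of squares that is globally minimized at $0$ and, over the last layer alone, is convex; so running GD on $W_{\mathcal{L}_\M}$ (freezing the rest) converges to an exact fit whenever one exists.

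For the upper bound $\mathcal{O}\!\left(d_k d_v \sum_{i=1}^{\mathcal{L}_\M}\min\{d_h^{(j)}\}_{j\ge i}\, d_h^{(j+1)}\right)$, the idea is a parameter-counting / dimension argument. The total number of scalar parameters available across all layers is $\sum_i d_h^{(i)} d_h^{(i+1)}$ (with $d_h^{(0)}=d_k$, $d_h^{(\mathcal{L}_\M)}=d_v$), and a perfect fit of $N$ linearly independent key-value pairs imposes roughly $N d_v$ independent scalar constraints. The subtlety the stated bound reflects is that the MLP's expressivity per layer is throttled by the narrowest layer downstream of it: a layer of width $d_h^{(i)}$ feeding into width $d_h^{(i+1)}$ cannot inject more than $\min_{j\ge i} d_h^{(j)}$ effective independent directions into the output, so its contribution to the number of fittable pairs is at most of order $\min\{d_h^{(j)}\}_{j\ge i}\, d_h^{(i+1)}$ (up to the $d_v$ factor for value dimension). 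Summing over layers and multiplying by the $d_k d_v$ bookkeeping factor yields the claimed expression. Formally I would argue by: (i) writing the memory output as a composition and bounding the rank of the induced feature map at each layer by the min-width downstream; (ii) invoking a general position / algebraic argument that if $N$ generic points can be simultaneously interpolated then $N d_v$ cannot exceed the effective parameter count; (iii) assembling the per-layer bounds.

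The main obstacle I expect is making the upper bound rigorous: "effective number of independent directions a layer can contribute" is intuitively clear but needs a careful statement, probably via bounding the rank of the Jacobian of the parameter-to-output map evaluated along the interpolation manifold, or via a dimension count on the image of the parameterization restricted to each layer's contribution. The nonlinearity $\sigma$ complicates a clean linear-algebra count, so I would likely phase the argument through the layer-wise composition structure and a generic-rank lemma (the rank of a composition is bounded by the minimum of the intermediate dimensions), then transfer that to a bound on how many independent constraints can be satisfied. The lower bound is comparatively routine — it reduces to \autoref{prop:linear-capacity} applied to the last layer after a generic-position reduction — so the real work is the upper bound's dimension-counting step and verifying it genuinely yields the stated $\sum_i \min\{d_h^{(j)}\}_{j\ge i} d_h^{(j+1)}$ form rather than the naive $\sum_i d_h^{(i)} d_h^{(i+1)}$.
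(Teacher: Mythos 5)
There is a genuine gap, and it sits exactly where you predicted: the upper bound. Your plan never supplies the device that makes the nonlinearity tractable, which in the paper's proof is the piecewise-affine reduction for ReLU networks: on a fixed activation pattern the whole MLP collapses to a single affine map $A = W^{(\mathcal{L}_{\M})} D^{(\mathcal{L}_{\M}-1)} W^{(\mathcal{L}_{\M}-1)} \cdots D^{(1)} W^{(1)}$ with diagonal $\{0,1\}$ matrices $D^{(\ell)}$, one argues that (generically) all stored keys lie in one linear region, and then exact memorization forces $A\mathbf{K} = \mathbf{V}$, so the number of storable pairs is controlled by $\rank(A)$ — and it is in bounding $\rank(A)$ through the factored form that the $\min\{d_h^{(j)}\}_{j\ge i}$ terms arise. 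Your proposed route is different in kind: you count \emph{parameters} against constraints and speak of the Jacobian of the parameter-to-output map, but that is not the quantity that throttles capacity here (a naive parameter count gives $\sum_i d_h^{(i)} d_h^{(i+1)}$, as you note), and the rank-of-composition lemma you invoke applies to the data-space map $\vk \mapsto \M(\vk)$ — i.e., precisely the paper's argument — not to the parameter-space map you propose to differentiate. Since you explicitly defer the step that converts ``effective directions per layer'' into a bound on the number of interpolable pairs, the upper bound in your write-up is a research plan rather than a proof.

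The lower bound half is also not in order. Your last-layer construction fits $N$ pairs using the $d_v d_h$ free entries of $W_{\mathcal{L}_{\M}}$ against $N d_v$ scalar equations, which caps $N$ at roughly $d_h$ (equivalently $d_k$ when $d_h \gtrsim d_k$); the jump from there to the stated $\mathcal{O}(d_k d_v)$ count is asserted via ``stacking'' and ``the earlier layers can be used to pack more independent directions,'' and you visibly abandon the calculation mid-sentence. As it stands this is \autoref{prop:linear-capacity} applied to the last layer, which yields a weaker bound than claimed; obtaining the extra $d_v$ factor requires an argument that genuinely uses more than the final linear layer (the paper itself does not prove this direction in-line, deferring to classical memorization-capacity results in the style of Baum). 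The convexity-of-the-last-layer remark about gradient descent reaching the interpolant is fine but does not repair either gap.
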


This theorem indicates that deep memory not only improves representational power but also further boosts network capacity, with advantages growing with depth. However, the upper bound remains subquadratic in key and value dimensions, raising the question if a long-term memory module can achieve super-linear capacity.

As stated earlier, the dimension of $\vk_t$s is crucial for increasing memory capacity. Simply increasing all key and value dimensions, however, significantly increase the number of parameters ($\mathcal{O}(d_{\text{in}})$ per each extra dimension) and memory usage, particularly with long contexts. To address this, building on methods from \citet{kacham2024polysketchformer, krotov2016dense}, we suggest using separable kernels $\sigma(x, y) = \phi(x)^{\top} \phi(y)$ for keys and queries. As an example of such kernels, we focus on polynomial kernels of degree at most $p$ to increase input dimensionality and thus network capacity. Given $p \in \mathbb{N}$,  let $\phi_{p}(x) = [x^{\beta}]_{|\beta| \leq p}$ be a polynomial mapping of $x$ with degree at most $p$. We redefine the associative memory module in Definition~\ref{dfn:associative-memory} by replacing the inner objective of $\mathcal{L}(\M(\mathcal{K}); \mathcal{V})$ with $\mathcal{L}(\M\left(\phi\left(\mathcal{K}\right)\right); \mathcal{V})$. This polynomial mapping enhances representational power by increasing the effective dimensionality of keys without additional parameter overhead for the input projections. Next, we discuss their effect on memory capacity, even with a single~matrix-valued~memory:

\begin{restatable}[Memory Capacity with Polynomial Mapping]{prop}{PolyCapacity}\label{thm:polynomial-capacity}
 Let $\phi_p(\cdot)$ be a polynomial mapping with degree at most $p$, and $\M$ be a matrix-valued memory that optimizes the internal objective of $\ell(\M_t; \phi_p(\vk_t), \vv_t) = \|\M_t \phi_p(\vk_t) - \vv_t \|^2_2$ with gradient descent. $\M$ can store the mapping of at most $\mathcal{O}\left( {d_k}^{p} \right)$ pairs of $(\vk_i, \vv_i)$ with linearly independent keys, where $d_k$ is the dimension of keys $\vk_i$. 
\end{restatable}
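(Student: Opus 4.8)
The plan is to reduce this to the linear (matrix-memory) case already handled in Proposition~\ref{prop:linear-capacity}, by viewing the polynomial map $\phi_p$ as merely enlarging the effective key space. First I would record the elementary fact that $\phi_p(x) = [x^\beta]_{|\beta|\le p}$ takes values in $\R^{D}$ with $D = \binom{d_k+p}{p}$, the number of monomials of degree at most $p$ in $d_k$ variables, and that for fixed $p$ one has $D = \Theta(d_k^p)$ (indeed $D = \tfrac{(d_k+1)(d_k+2)\cdots(d_k+p)}{p!} \le \tfrac{(d_k+p)^p}{p!}$). The internal objective $\ell(\M_t;\phi_p(\vk_t),\vv_t) = \|\M_t\phi_p(\vk_t)-\vv_t\|_2^2$ with a matrix-valued memory $\M_t \in \R^{d_v\times D}$ is then literally an instance of the $\ell_2$ attentional bias of Proposition~\ref{prop:linear-capacity}, with the key vectors $\vk_t$ replaced by their lifts $\phi_p(\vk_t)\in\R^D$. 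Since the proof of Proposition~\ref{prop:linear-capacity} uses only the linear-algebraic structure of the keys and not the fact that they live in $\R^{d_k}$ in particular, applying it verbatim with key dimension $D$ gives a capacity of $\mathcal{O}(D) = \mathcal{O}(d_k^p)$.

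For completeness I would also spell out the direct argument, which makes the role of gradient descent explicit. Running GD on $\ell$ yields the update $\M_t = \M_{t-1} - 2\eta_t(\M_{t-1}\phi_p(\vk_t)-\vv_t)\phi_p(\vk_t)^\top$, so every iterate satisfies that the rows of $\M_t - \M_0$ lie in $W := \mathrm{span}\{\phi_p(\vk_j)\}_j \subseteq \R^D$. Suppose the module perfectly stores $N$ pairs, i.e.\ $\M_N\phi_p(\vk_i) = \vv_i$ for $i = 1,\dots,N$, with the corresponding lifted keys linearly independent. Writing these equations as $\M_N\Phi = V$ with $\Phi = [\phi_p(\vk_1)\ \cdots\ \phi_p(\vk_N)] \in \R^{D\times N}$ and $V = [\vv_1\ \cdots\ \vv_N] \in \R^{d_v\times N}$, solvability for a generic target $V$ forces $\Phi$ to have full column rank, hence $N \le \mathrm{rank}(\Phi) \le D$; equivalently, $N$ linearly independent vectors in $\R^D$ already force $N\le D$. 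Combining with $D = \Theta(d_k^p)$ gives $N = \mathcal{O}(d_k^p)$.

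The only delicate points are bookkeeping. One is pinning down exactly what "store the mapping of $N$ pairs with linearly independent keys" should mean after the lift: I would take it to mean the lifted keys $\phi_p(\vk_i)$ are linearly independent (the natural analogue of Proposition~\ref{prop:linear-capacity}, and the reading that makes the bound meaningful, since generic keys have linearly independent lifts up to $N = D$), while noting that the upper bound is in any case insensitive to this choice. The other is the monomial count $D=\binom{d_k+p}{p}$ together with its $\Theta(d_k^p)$ asymptotics for constant $p$, which is routine. I do not anticipate a genuine obstacle: once the polynomial feature map is recognized as simply raising the effective key dimension from $d_k$ to $D$, the statement follows from Proposition~\ref{prop:linear-capacity} as an immediate corollary, and the substantive content is just justifying that reduction and carrying out the dimension count.
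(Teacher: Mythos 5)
Your proposal is correct and follows essentially the same route as the paper: both arguments reduce the question to a rank/dimension count in the lifted monomial space of dimension $D=\binom{d_k+p}{p}=\Theta(d_k^p)$, so that exact storage of more than $D$ independent associations is impossible for any matrix $\M \in \R^{d_v\times D}$. The only cosmetic difference is that the paper pins the contradiction on linearly independent \emph{values} (via $\rank(\M\Phi)\le\rank(\Phi)\le D < \rank(V)$), whereas you phrase it through linearly independent \emph{lifted keys} and a reduction to Proposition~\ref{prop:linear-capacity} with key dimension $D$ — the same dimension-counting idea, and your explicit note about how to read ``linearly independent keys'' after the lift is a reasonable clarification of the statement.
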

Beyond the above intuition, polynomial kernels are further motivated by two perspectives: (1) Approximating Softmax using Taylor series; and (2) Input feature gating. For the sake of clarity, we continue with linear memory and two popular attentional biases i.e., $\ell^{(1)}(\M_t; \vk_t, \vv_t) = \inner{\M_t \vk_t}{\vv_t}$ and  $\ell^{(2)}(\M_t; \vk_t, \vv_t) = \| \M_t \phi(\vk_t) - \vv_t \|^2_2$.  The same process can be applied on other attentional objectives and deep memory modules. Optimizing these objectives using gradient descent in the inner loop results in the following recurrent formulas:
\begin{align} \tag{Hebbian Rule} \label{eq:heb}
    &\ell^{(1)}(\M_t; \vk_t, \vv_t)\:\: : \: \:\M_t = \M_{t-1} + \eta_t \vv_t \phi(\vk_t)^\top, \\ \tag{Delta Rule} \label{eq:delta}
    &\ell^{(2)}(\M_t; \vk_t, \vv_t)\:\: : \:\: \M_t = \left( \mathbf{I} - \eta_t \phi(\vk_t) \phi(\vk_t)^{\top}\right) \M_{t-1} + \eta_t \vv_t \phi(\vk_t)^{\top}.
\end{align}

\head{Kernel Attention Perspective for the Special Case of Hebbian Rule} 
The formulation for (\ref{eq:heb}) is equivalent to kernel linear attentions~\citep{polysketchformer-kacham2024, wang2025test, hua2022transformer, kasai2021finetuning, katharopoulos2020transformers, arora2024simple}. In this viewpoint, the role of $\phi(.)$ is to approximate \texttt{Softmax} or more accurately the exponential kernel. Since exponential kernel with normalization (i.e., \texttt{Softmax}) is not separable, it results in Transformers' quadratic time and memory complexity. However, Transformers' exponential feature map kernel ($\exp(\vq_i^{\top} \vk_j)$) can be approximated using its Taylor series as:
\begin{align}
    \exp\left(\vq_i^{\top} \vk_j\right) \approx 1 + \vq_i^{\top} \vk_j + \frac{(\vq_i^{\top} \vk_j)^2}{2!} + \frac{(\vq_i^{\top} \vk_j)^3}{3!} + \dots
\end{align}
Our polynomial feature map extends this approximation to a more general case of:
\begin{align}
    \exp\left(\vq_i^{\top} \vk_j\right) \approx \phi_p(\vq) \phi(\vk)^{\top} \!= a_0 + a_1 \vq_i \vk_j^{\top} + a_2 (\vq_i^{\top} \vk_j)^2 + a_3 (\vq_i^{\top} \vk_j)^3 + \dots + a_p (\vq_i^{\top} \vk_j)^p,
\end{align}
with learnable parameters $a_i \in \R$ initialized at $a_i = \frac{1}{i !}$, the polynomial kernel can be viewed as an expressive approximator of \texttt{Softmax} attention. This provides theoretical motivation for using polynomial kernels, especially when memory capacity is limited; i.e., with (i) linear memory and (ii) Hebbian learning rule. This intuition, however, further generalizes to more expressive cases using deep memory modules and more complex attentional biases (i.e., Eq. \ref{eq:delta}). That is, $\exp(\cdot)$ feature mapping
has infinite dimension and provides a more powerful similarity measure of keys and queries (i.e., $\vq_i^{\top} \vk_j$); however, its computation with normalization can cause additional memory and time complexity to the model. Using polynomial kernels in architectures with deep memory and complex attentional bias can further enhance  performance by approximating  more powerful representations for keys-queries similarities (i.e., $\vq_i^{\top} \vk_j$). See \autoref{sec:deeptransformers} for additional discussions on exponential kernels and Transformers.

\head{Input Gating Interpretation} Another perspective that motivates the use of polynomial features is their more expressive representational power in modeling complex functions compared to the simple case of $\phi(x) = x$. That is, the coefficients of $a_i$s can be seen as input feature gating, in which $a_i \rightarrow 0$ means excluding the feature map of $[x^j]_{|j| = i}$, and $a_i \rightarrow 1$ means retaining the corresponding feature. This is similar to the gating mechanisms of RNNs but on the input rather than the memory. This gating mechanism clearly provides a more representational power as the model can learn to set $a_i \rightarrow 0$ for all $i \neq 1$ and $a_1 \rightarrow 1$, resulting in the simple case of $\phi(x) = x$.

\subsection{Long-term Memory with Context Memorization}\label{sec:omega-rule}
As discussed earlier, one of the critical drawback of most existing recurrent models is their online nature, in which they optimize the inner objective (attentional bias) with respect to only the current input while retaining the previous state of the memory~\citep{behrouz2025Miras, liu2024longhorn}, i.e.,
\begin{align}
    \min_{\M} \ell(\M; \vk_t, \vv_t) + \texttt{Ret}_t(\M, \M_{t-1}),
\end{align}
where $\texttt{Ret}(\cdot, \cdot)$ is the retention gate. This online nature while making the optimization of the memory simpler and faster, can cause sub-optimal memorization of the context as memory is greedily memorize individual tokens. In a more general case, however, one can optimize the memory at each time stamp with respect to the entire context (input sequence), i.e.,
\begin{align}\label{eq:global-optimization}
     \min_{\M} \sum_{i = 1}^{t} \ell(\M; \vk_i; \vv_i).
\end{align}
This strict global formulation generally presents two critical limitations: (1) Efficiency: One of the important advantages of recurrent architectures is their efficiency at longer context in both training and inference. Optimizing the memory with respect to all the past tokens (entire context), however, (i) causes additional optimization constraints at each memory update step, resulting in inefficiency at extremely large sequences, and (ii) requires caching the past keys and values at the test time, increasing the memory consumption; (2) Context Pruning: In large context tasks optimizing with all past tokens can cause sub-optimal performance mainly due to the context change (or irrelevant context) in the middle of the input sequence. This observation has resulted to design architectures with retention (forget) gate, enabling models to erase memory when past context is no longer needed~\citep{sun2023retentive, peng2025rwkv7, behrouz2024titans, behrouz2025Miras, yang2024gatedattn}.

To address these limitations, we present a sliding window recurrent model that optimizes its attentional bias w.r.t. a window of past tokens. For a memory module $\M(\cdot)$ and window length~$c \geq 1$, we optimize the memory internal objective as:
\begin{align}
    \min_{\M} \sum_{i = t - c + 1}^{t} \gamma^{(t)}_i \: \ell(\M; \vk_i, \vv_i), 
\end{align}
where $\ell(\M; \vk_i, \vv_i)$  measures the predicted mapping for $(\vk_i, \vv_i)$ pair and $\gamma^{(t)}_i$ is the decay term for the effect of $i$-th token in the optimization process. Building upon this formulation, 
we present  \learningrule{} rule, which is strictly more powerful than the popular Delta learning rule~\citep{widrow1988adaptive, schlag2021linear}:

\begin{c2box}
\noindent
\textbf{\learningrule{} Rule}: Let $\vk_i \in \R^{d_k}$ and $\vv_i \in \R^{d_v}$ be the input keys and values, and $\M(\cdot)$ be a neural architecture that serves as the memory module. Given a local context length of $c \in \mathbb{N}_{\geq 1}$, the updating the memory module $\M$ using \learningrule{} learning rule is defined as optimizing the following loss function with gradient descent: 
\begin{align}\label{eq:omega}
    \min_{\M} \sum_{i = t - c + 1}^{t} \gamma^{(t)}_i \left\| \M\left(\vk_i\right) - \vv_i \right\|^2_2
\end{align}
\end{c2box}
Following \cite{behrouz2025Miras}, this update rule can  be extended to $q$-\learningrule{} rule (or other variants) by replacing $\ell_2(\cdot)$ with $\ell_q(\cdot)$. 
In the extreme cases of (1) $c =1$: the update rule becomes online (Delta  rule); and (2) $c = \infty$ or context length: the update becomes global optimization w.r.t. all past tokens. In this formulation, parameters $\gamma^{(t)}_i \in [0, 1]$ act as hard (direct) gates for the past tokens. That is, $\gamma^{(t)}_i \rightarrow 0$ means that the model directly prunes the optimization of $i$-th token in the local context, while $\gamma^{(t)}_i \rightarrow 1$ means fully incorporating the optimization of memory for $i$-th token in the local context. In our design, we use input-dependent parameters for $\gamma^{(t)}_i$, providing in-context pruning ability. Note that, the design of sliding window recurrence allows such flexibility as for each token we need a constant number of gates; i.e., $\{\gamma^{(t)}_i\}_{i = 1}^{c}$. Using input-dependent gates for the global optimization (\autoref{eq:global-optimization}), however, can result in significant parameter increase and memory usage, diminishing the advantages of recurrent models.

\head{\omodel} We now present \omodel, a novel sequence model that updates its memory using \learningrule{} rule. To enhance the memory capacity of \omodel, we use polynomial kernels on $\vk$s and $\vq$s. Accordingly, optimizing the objective in \autoref{eq:omega}, results in an update rule of \omodel{} as:
\begin{align}
    \M_t = \alpha_t \M_{t-1} - \undermath{\textcolor{c4}{\text{Surprise of the context}}}{\nabla \sum_{i = t - c + 1}^{t} \gamma^{(t)}_i \left\| \M\left(\phi(\vk_i)\right) - \vv_i \right\|^2_2}, 
\end{align}
or in the spacial case of linear memory:
\begin{align}
    \M_t = \left( \texttt{diag}(\alpha_t) - \sum_{i = t-c+1}^{t} \gamma^{(t)}_i \phi(\vk_i)\phi(\vk_i)^{\top} \right)  \M_{t-1} - \sum_{i = t-c+1}^{t} \gamma^{(t)}_i \vv_i\phi(\vk_i)^{\top}. 
\end{align}
From the memory perspective, \learningrule{} rule (\omodel) does not measure the surprise of a token, but the surprise of a local context based on the context-aware combination of individual tokens within the context.  

\begin{figure*}
    \centering
    \includegraphics[width=0.9\linewidth]{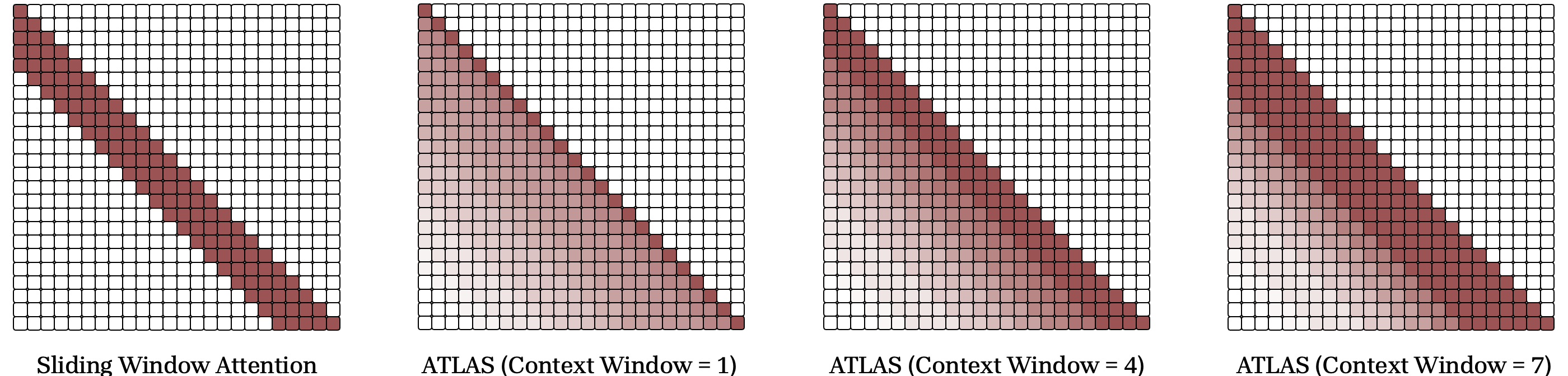}
    \caption{The illustration of tokens dependencies in SWA and \model{} or \omodel{} with different context length.}
    \label{fig:attn_ATLAS}
\end{figure*}

\head{Beyond Gradient Descent} The concept of \learningrule{} rule and ``test time memorization of context'' can simply be extended to optimizing the objective in \autoref{eq:omega} with any arbitrary optimizer, even beyond simple gradient descent. We use two extreme cases for $c$ as the illustrations. In the first case, we let $c = 1$, $\gamma^{(t)}_i = 1$, and use gradient descent with momentum as the optimizers, resulting in the following update rule:
\begin{align}
    &\M_t = \alpha_t \M_{t-1} + \SSS_t \\
    &\SSS_t = \theta_t \SSS_{t-1} - \eta_t \nabla \ell(\M_{t-1}; \vk_t, \vv_t).
\end{align}
This update rule is equivalent to the long-term neural memory in Titans~\citep{behrouz2024titans}. In the second case, using a linear memory $\M$, letting $\gamma^{(t)}_i = 1$, and $c$ be equal to the context length, the memory update process is equivalent to optimizing the (regularized) least-squares problem:
\begin{align}
    \M_t = \min_{\M} \sum_{i = 1}^{t} \left\| \M \vk_i - \vv_i \right\|^2_2.
\end{align}
\citet{von2023uncovering} suggest directly optimizing the above objective and use Sherman-Morrison formula~\citep{sherman1950adjustment} to recursively calculate the inverse term in the optimal solution. Despite the optimality of memory, such direct solutions comes with the cost of non-parallelizable training and also are limited to only the linear matrix-valued memory setup. Furthermore, as discussed earlier, the global nature without any direct hard gating terms (i.e., $\gamma^{(t)}_i$s) can force the model to \emph{not} prune the context, damaging the performance in longer sequences.

\subsection{Parallelizing \learningrule{} Rule}\label{sec:parallelization}
While \learningrule{} rule provides a more general and expressive formulation for the design of memory modules than Hebbian or Delta learning rules, its applicability to large-scale models relies on its efficiency in training. To this end, we discuss a fast parallelizable training algorithms that does not add any significant computational overhead with the online counterpart version (i.e., $c = 1$). A  naive implementation requires materializing \( c \) gradients \( \nabla \ell \in \mathbb{R}^{d_{\text{in}} \times d_{\text{in}}} \), which can result in a significantly higher memory footprint and I/O cost when \( d_{\text{in}} \) is large. Also, to fully utilize hardware accelerators such as TPUs and GPUs, it is important to tensorize computations and maximize the use of \texttt{matmul} operations. Motivated by recent work~\citep{behrouz2024titans, sun2024learning}, we propose a simple sliding window masking strategy that supports efficient parallel training while avoiding substantial memory overhead. Specifically, we partition the input sequence with length $L$ into chunks of size \( b  \geq 1 \), each of which is represented by $\mathbf{S}_i = \{ \vx_{(i-1)b+1}, \dots, \vx_{i b}\}$. Then for each chunk, we calculate the gradients with respect to the last state of the previous chunk. For the sake of clarity, we first assume \( \gamma^{(t)}_i =  \eta_t \) for all positions in the sequence. When the chunk size is \( b = 1 \), the update rule is:
\begin{align}
    \M_t = \alpha_t \M_{t-1} - \eta_t \sum_{i=t-c+1}^{t} \nabla \ell(\M_{t-1}; \vk_i, \vv_i),
\end{align}
where \( \M_t \) is the model state at step \( t \), \( \alpha_t \) and \( \eta_t \) are the weight decay and learning rate parameters respectively, and \( (\vk_i, \vv_i) \) denote the input pair at position \( i \). In practice, we strike a balance between the fully recurrent form and the fully parallel form by dividing the sequence into smaller chunks. Within each chunk (intra-chunk), we apply parallel computation, while across chunks (inter-chunk), we adopt a recurrent computation scheme. We now define \( t' = t - \operatorname{mod}(t, b) \). That is, for time steps \( t \) such that \( t' \leq t < t' + b \), the update rule within each chunk becomes:
\begin{align} \label{eq:omega_chunk_update_rule}
    &\M_t = \alpha_t ... \alpha_{t'} \M_{t'} - \sum^{t}_{n=t'} \frac{\alpha_t ... \alpha_{t'}}{\alpha_n ... \alpha_{t'}}\eta_n \undermath{G_t}{\sum^{n}_{i=n-c+1}\nabla \ell(\M_{t'}; \vk_i, \vv_i)}
\end{align}
In our implementation, for $G_t$, we follow the same gradient computation approach as described in Titans~\citep{behrouz2024titans} but additionally apply a sliding window mask \( M_s \) during the broadcasting operation (e.g., using \texttt{einsum}). When \( c = 1 \), the sliding window mask \( M_s \) reduces to the identity matrix. For \( c > 1 \), \( M_s \) is an identity matrix except that the \( c - 1 \) positions immediately preceding each diagonal entry are also set to 1. This allows gradient contributions from a window of size \( c \), enabling efficient computation without materializing all gradients inside the chunk.

\section{\protect \textsc{DeepTransformer}s: Transformers with Deep Memory}\label{sec:deeptransformers}
Recent studies have extensively discussed Transformer architectures through the lens of associative memory~\citep{wang2025test, sun2024learning, behrouz2025Miras}. Accordingly, it is natural to ask how our discussions of memory capacity as well as \learningrule{} rule can affect Transformers. In this section, we discuss how our formulation of \learningrule{} rule is connected to Transformers and their sliding window counterparts (i.e., SWA). We then further provide two extensions to Transformers, each of which is a strict generalization of Transformers.

\subsection{Online and Local Context Optimization of Memory}

\head{Connection to Sliding Window Attention}
\texttt{Softmax} attention block can also be reformulated as a non-parametric  solution to the $\ell_2(\cdot)$ regression  with  Nadaraya-Watson estimators~\citep{zhang2022analysis, fan2018local}:
\begin{align}
    \M^* = \arg\min_{\M} \sum_{i = 1}^{L} \mathbf{s}(\vk_i, \vq) \|\vv_i - \M \|^2_2
    = \sum_{i = 1}^{L} \frac{\mathbf{s}(\vk_i, \vq)}{\sum_{j = 1}^{L} \mathbf{s}(\vk_j, \vq)} \vv_i,
\end{align}
where $L$ is the sequence length. While this formulation optimizes the memory $\M$ with respect to the entire sequence length, one can limit the optimization process to the past $c$ tokens, resulting in:
\begin{align}
    \M^* = \arg\min_{\M} \sum_{i = t - c + 1}^{t} \mathbf{s}(\vk_i, \vq_i) \|\vv_i - \M \|^2_2  = \sum_{i = t-c + 1}^{t} \frac{\mathbf{s}(\vk_i, \vq)}{\sum_{j = t - c + 1}^{t} \mathbf{s}(\vk_j, \vq)} \vv_i, 
\end{align}
which is equivalent to the sliding window attention (SWA). This connection provides an important insight on the difference of attention and recurrent models: Not only attention is a non-parametric solution (contrary to the parametric nature of recurrent models), it globally optimizes its internal objective (attentional bias), while most recent modern recurrent models are online learners~\citep{sun2024learning, yang2024gated, behrouz2025Miras, peng2025rwkv7}\footnote{Two of the exceptions are Titans~\citep{behrouz2024titans} and Mesa-layer~\citep{von2023uncovering}, where Mesa-layer optimizes the memory with respect to all past tokens (comes with the cost of slow training), and Titans optimizes the memory with respect to all past tokens but with an implicit decay term (i.e., the result of the momentum) for each past token, maintaining parallelizability.}
. Our formulations of sliding window RNN and \learningrule{} rule fill this gap by optimizing the memory with respect to a context window of past tokens based on parametric methods, effectively memorizing the context instead of individual tokens.

\head{Deep Linear Attention}
As a novel baseline, we present Deep (Gated) Linear Attention (DLA) that replaces a matrix-valued memory in (gated) linear attention~\citep{katharopoulos2020transformers, yang2024gatedattn} with a deep neural network (e.g., $k$-layer MLP). As discussed earlier in (\ref{eq:heb}), using dot product similarity as the internal attentional bias results in linear attention. Thus, leveraging recent deep memory modules \citep{sun2024learning, behrouz2024titans, behrouz2025Miras}, we optimize the memory using gradient descent with dot product attentional bias:
\begin{align}\label{eq:DLA}
    \M_t = \alpha_t \M_{t-1} -  \eta_t  \nabla \ell(\M_{t-1}; \phi(\vk_t), \vv_t),
\end{align}
where $\ell(\M_{t-1}; \phi(\vk_t), \vv_t) = \inner{\M_{t-1}(\phi(\vk_t))}{\vv_t}$ and $\phi(\cdot)$ is a polynomial kernel. The training of DLA can simply be parallelized using the hybrid of linear and non-linear chunk-wise training, the same as \citet{sun2024learning, behrouz2024titans} and our discussion in \autoref{sec:parallelization}.

\head{Sliding Window Linear Attention}
Building upon the above intuition and the connection of our formulation to SWA, we present Sliding Window Linear Attention (SWLA) block. Following the formulation of linear attention in associative memory perspective~\citep{behrouz2025Miras}, we use dot product similarity (i.e., $\ell(\M_t; \vk_i, \vv_i) = \inner{\M_t(\vk_i)}{\vv_i}$) as the attentional bias and optimize the loss function using gradient descent. For the sake of clarity, we use a linear memory here to derive the closed form:
\begin{align}
    \M_t = \alpha_t\M_{t-1} - \eta_t \nabla \sum_{i = t - c + 1}^{t} \ell(\M_{t-1}; \phi(\vk_i), \vv_i) = \M_{t-1} +  \sum_{i = t - c + 1}^{t} \gamma^{(t)}_i \vv_i \phi(\vk_i)^{\top}
\end{align}
In the online case ($c=1$) and $\phi(\cdot) = (\cdot)$, this recurrence is the same as linear attention~\citep{katharopoulos2020transformers}.

\subsection{Memory Capacity and Exponential Kernels}\label{sec:deeptransformer-capacity}
We first recall the formulation of \texttt{softmax} attention in Transformers (i.e., \autoref{eq:attention}):
\begin{align}\label{eq:attention2}
    &\mathbf{y}_i = \frac{1}{{\sum_{\ell = 1}^{i} \exp\left( \mathbf{q}_i^{\top} \mathbf{k}_{\ell}/\sqrt{d_{\text{in}}}\right)}} \sum_{j = 1}^{i} \exp\left( \mathbf{q}_i^{\top} \mathbf{k}_j/\sqrt{d_{\text{in}}}\right) \mathbf{v}_j,
\end{align}
which its $\exp(\cdot)$ kernel is not separable and so cannot be written as a recurrence. Following the discussion in \citet{polysketchformer-kacham2024}, one can see $\exp(\cdot)$ kernel (compared to polynomial kernel $\phi_p(\cdot)$) as a feature map that maps the input into an infinite dimension. That is, we define: 
\begin{align}\label{eq:infi-map}
    \phi^{*}(x) = \begin{pmatrix}
    1 \\
    \frac{x}{\sqrt{1}} \\
    \frac{x^{\otimes 2}}{\sqrt{2!}} \\
    \frac{x^{\otimes 3}}{\sqrt{3!}} \\
    \vdots
    \end{pmatrix}, \qquad\qquad \phi_p(x) = 
    x^{\otimes p},
\end{align}
where $x^{\otimes p} = x \otimes x^{\otimes (p-1)}$ is a ``self-tensoring'' operator with Kronecker product~\citep{polysketchformer-kacham2024} and so:
\begin{align}
    \exp(\vq_t^{\top} \vk_t) = \phi^{*}(\vq_t)^{\top} \phi^{*}(\vk_t).
\end{align}
Based on the above kernel, we can reformulate the attention (see \autoref{eq:attention2}) as: (we remove $1/\sqrt{d_{\text{in}}}$ term for the sake of simplicity)
\begin{align}
    &\mathbf{y}_i = \frac{1}{{\sum_{\ell = 1}^{i} \exp\left( \mathbf{q}_i^{\top} \mathbf{k}_{\ell}/\sqrt{d_{\text{in}}}\right)}} \sum_{j = 1}^{i}  \mathbf{v}_j \phi^{*}(\mathbf{k}_j)^{\top} \phi^{*}(\mathbf{q}_i)  = \frac{1}{{\sum_{\ell = 1}^{i} \exp\left( \mathbf{q}_i^{\top} \mathbf{k}_{\ell}/\sqrt{d_{\text{in}}}\right)}} \left(\sum_{j = 1}^{i}  \phi^{*}(\mathbf{v}_j \mathbf{k}_j)^{\top}  \right) \phi^{*}(\mathbf{q}_i) = \M_i \phi^{*}(\mathbf{q}_i),
\end{align}

This formulation, provides another important insight on the differences of attention and (kernel) recurrent models: \texttt{Softmax} attention as an associative memory has an unbounded memory and so can better memorize larger context into its parameters. Building upon this insight, we present \textsc{DeepTransformers} by replacing polynomial kernel with $\phi^{*}(\cdot)$ kernel in Deep Linear Attention formulation (\autoref{eq:DLA}), resulting in unnormalized formulation of:
\begin{align}\label{eq:DeepTransformers}
    \M_t = \M_{t-1} - \nabla  \inner{\M_{t-1}(\phi^{*}(\vk_t))}{\vv_t}.
\end{align}
In the special case of linear memory, we can derive the closed form for the above formulation as:
\begin{align}
    \M_t = \M_{t-1} - \nabla  \inner{\M_{t-1}\phi^{*}(\vk_t)}{\vv_t} = \M_{t-1} + \vv_t \phi^{*}(\vk_t)^{\top}  = \sum_{i=1}^{t} \vv_i \phi^{*}(\vk_i)^{\top}  \quad \Rightarrow \quad \mathbf{y}_t = \M_t \phi^{*}(\vq_t) = \sum_{i = 1}^{t} \vv_i \exp(\vq_i^{\top} \vk_i ) ,
\end{align}
which matches the output of the unnormalized Transformers. Therefore, \textsc{DeepTransformers} are strict generalizations of Transformers with \texttt{softmax} attention~\citep{transformers}.

\subsection{Deep Omega Transformer (\protect \textsc{Dot}): Transformers with \learningrule{} learning rule} \label{sec:dot}
Our above formulation of \textsc{DeepTransformers} is based on the (\ref{eq:heb}), which is also used in original Transformers. However, as discussed earlier, using more powerful memory management and learning rules in associative memory modules can further enhance their performance. To this end, we extend the above formulation by replacing the Hebbian rule with our \learningrule{} learning rule, resulting in an unnormalized formulation of Deep Omega Transformers (\textsc{Dot}):
\begin{align}
     \M_t = \M_{t-1} - {\nabla \sum_{i = t - c + 1}^{t} \gamma^{(t)}_i \left\| \M\left(\phi^{*}(\vk_i)\right) - \vv_i \right\|^2_2}.
\end{align}
We now discuss special instances of \textsc{Dot} to provide further intuition on its generalized formulation. 

\head{Linear Memory} This setup results in the following unnormalized formulation:
\begin{align}
     &\M_t = \left( \mathbf{I} - \sum_{i = t-c+1}^{t} \gamma^{(t)}_i \phi^{*}(\vk_i)\phi^{*}(\vk_i)^{\top} \right)  \M_{t-1} - \sum_{i = t-c+1}^{t} \gamma^{(t)}_i \vv_i\phi^{*}(\vk_i)^{\top} \\ 
     \Rightarrow \:\:&\mathbf{y}_t = \M_t \phi^{*}(\vq_t) = \left( \mathbf{I} - \sum_{i = t-c+1}^{t} \gamma^{(t)}_i \phi^{*}(\vk_i)\phi^{*}(\vk_i)^{\top} \right)  \M_{t-1} \phi^{*}(\vq_t) \ - \sum_{i = t-c+1}^{t} \gamma^{(t)}_i \vv_i\exp(\vq_t^{\top}\vk_i). 
\end{align}
\head{Online Case with $c=1$} We now let $c = 1$:
\begin{align}
    &\M_t = \left( \mathbf{I} - \eta_t \phi^{*}(\vk_t)\phi^{*}(\vk_t)^{\top} \right)  \M_{t-1} - \eta_t \vv_t \phi^{*}(\vk_t)^{\top} \\ 
    \Rightarrow \:\: &\mathbf{y}_t = \M_t \phi^{*}(\vq_t) = \left( \mathbf{I} - \eta_t \phi^{*}(\vk_t)\exp(\vq_t^{\top}\vk_t) \right)  \M_{t-1} \ - \eta_t \vv_t\exp(\vq_t^{\top}\vk_t). 
\end{align}
The above (unnormalized) formulation can be seen as the generalization of Transformers with Delta Rule. Therefore, due to the unbounded memory, \textsc{Dot} not only appends the new keys and values (similar to original Transformers), but it also replaces the new value with its predicted value from the previous state.

\section{\model: A Locally Optimal Memory with High Capacity} \label{sec:model}
Although the design of \learningrule{} rule allows the model to memorize the context instead of individual tokens and also the use of polynomial (or exponential) feature mapping increases memory capacity, the memory management (i.e., optimization of mappings between keys and values) is still limited to a simple gradient descent. This choice of optimizer can lead the model to a low-quality solution at a local optima, damaging the performance of the model in longer contexts. To overcome this issue, we suggest using Muon optimizer~\citep{jordan2024muon} (with weight decay) that not only approximates second-order information, but it also mostly leverages matrix multiplication and can be parallelized across the sequence. Accordingly, the use of Muon for optimizing the internal objective in \autoref{eq:omega}, results in the following update rule:
\begin{align}
     &\M_t = \alpha_t \M_{t-1} - \eta_t \: \texttt{NewtonShulz-$k$}(\mathcal{S}_t),\\
     &\mathcal{S}_t = \theta_t \SSS_{t-1} + \nabla \sum_{i = t - c + 1}^{t} \gamma^{(t)}_i \left\| \M\left(\phi^{*}(\vk_i)\right) - \vv_i \right\|^2_2,
\end{align}
where $c$ is the local context length and $k$ is the number steps for \texttt{NewtonShulz} operations. For the additional discussion on the algorithm and this operation we refer the reader to \citet{jordan2024muon}. Following the literature on Muon optimizer, we know that when $k \rightarrow \infty$, then  $\texttt{NewtonShulz-$k$}(\mathcal{S}_t)$ converges to the nearest semi-orthogonal matrix to the momentum term $\SSS_t$ and so approximate second-order information with a lower error. Therefore, interestingly, parameter $k$ can be considered as an internal test-time compute parameter in \model, where using more steps can potentially result in better memorization.

\subsection{Parallel Training}
In this section, we discussed how the training process of \model{} can be parallelized. For the sake of clarity, we assume $c=1$. Generalizing the process to arbitrary value for $c$ follows the procedure in \autoref{sec:parallelization}. We use the same process as we discussed in \autoref{sec:parallelization} and so chunk the sequence and compute all the gradients with respect to the last state of the previous chunk. Accordingly, using the recurrence of \model{} with momentum but without , we have:
\begin{align}
    &\M_t = \alpha_t \M_{t-1} + \SSS_t \\
    &\SSS_t = \theta_t \SSS_{t-1} - \eta_t \nabla \ell(\M_{t'}; \vk_t, \vv_t).
\end{align}

Since $t'$ is the last state of the previous chunk, we can calculate all the gradients before hand and so we let $u_t = \nabla \ell(\M_{t'}; \vk_t, \vv_t)$. Therefore, we have:
\begin{align}
    &\M_t = \alpha_t \M_{t-1} + \SSS_t \\
    &\SSS_t = \theta_t \SSS_{t-1} - \eta_t u_t.
\end{align}
Now by expanding the second recurrence, we have:
\textbf{\begin{align}
    \SSS_t &= \theta_t \SSS_{t-1} - \eta_t \undermath{u_t}{\nabla \ell(\M_{t'}; \vk_t, \vv_t)}, \\
    \Rightarrow \SSS_t &= \undermath{\beta_t}{\theta_t ... \theta_1} \SSS_0 - \sum_{i = 1}^{t} \frac{\theta_t ... \theta_1}{\theta_i ... \theta_1} \eta_i u_i = \beta_t \SSS_0 - \Theta \odot E \odot G, 
\end{align}}
where $G$ is the gradient matrix, $E$ and $\Theta$ are diagonal matrices with value $\theta$ and $\eta$, and $\odot$ is broadcasting. 

The main advantage of the above formulation (chunk wise recurrence) is that the recurrence of momentum is independent of the state of memory. That is, we can calculate all the momentum terms in the beginning of the chunk using the above formulation. Now in the Muon case, we want to use Newton-Schulz algorithm on the momentum terms, which results in:
\begin{align}
    &\SSS'_t \leftarrow \texttt{Newton-Schulz5}(\SSS_t), \\
    &\M_t = \M_{t-1} + \SSS'_t.
\end{align}
Since the calculation of all $\SSS_t$s can be done in parallel, the calculation of $\texttt{Newton-Schulz5}(\cdot)$ can also be done in parallel.

\head{Architectural Backbone}
As for the architectural backbone, we follow the recent modern recurrent models~\citep{behrouz2024titans, arora2024simple, yang2024parallelizing, Allen2025-canon} and use linear layers to project keys, values, and queries, followed by short convolution layers with size 4. We apply normalization on keys and queries to stabilize the training. We also follow \citet{behrouz2024titans} and use two hybrid variants of MAL and MAG for our \model{} model. The architectures are illustrated in \autoref{fig:arch-vis}. For models with deep memory architectures we use 2-layer MLP with residual connections:
\begin{align}
    \M(\cdot) = (\cdot) + W_1 \sigma(W_2 (\cdot)).
\end{align}
We further extend this memory architecture, which is commonly used in recent studies~\citep{behrouz2024titans, irie2021going, behrouz2025Miras}, to gated MLP layer as:
\begin{align}
    \M(\cdot) = (\cdot) + W_1 \left(\sigma\left(W_2 (\cdot)\right) \otimes W_3 (\cdot) \right),
\end{align}
where $W_1, W_2, W_3$ are linear learnable matrices. We refer to \model{} with the above memory architecture as \model++.

\begin{figure*}
    \centering
    \includegraphics[width=0.9\linewidth]{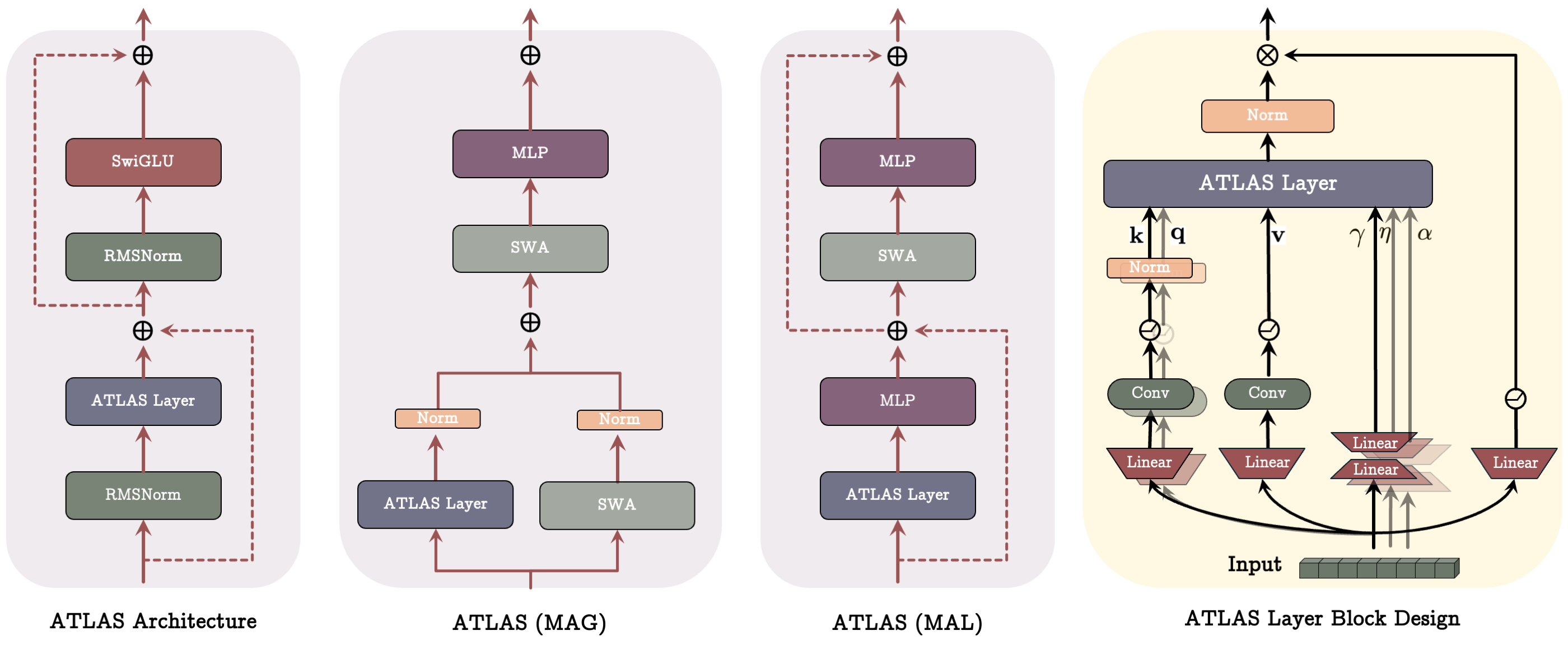}
    \caption{Visualization of the \model's (and our other variants') architecture, and its hybrid counterpart with SWA.}
    \label{fig:arch-vis}
\end{figure*}

\section{Experiments}\label{sec:experiments}
Next, we evaluate the performance of \model, \omodel, \textsc{DeepTransformer}s, and \textsc{Dot} in language modeling, commonsense reasoning, needle in haystack, and in-context recall tasks. Although we also discussed several other variants, such as SWLA, in our experiments we focus on the above models so in addition to comparison with state-of-the-art models, we also answer the following questions: 
\begin{enumerate}
    \item Is deep memory effective for \texttt{softmax} attention? (see \autoref{tab:lm_results} — comparison of Transformer++ and \textsc{DeepTransformer}s)
    \item Does the use of \learningrule{} improve the performance \texttt{softmax} attention? (see \autoref{tab:lm_results} — comparison of Transformer++, \textsc{DeepTransformer}s, and \textsc{Dot})
    \item Does the \learningrule{} rule provide more expressive memory update? (see \autoref{tab:lm_results} and \autoref{tab:ablation} — the performance of \omodel, and \model)
    \item Is locally optimal memory update effective? (see \autoref{tab:lm_results} and \autoref{tab:ablation} — comparison of \omodel, and \model)
    \item Is non-linear feature mapping effective? (see \autoref{tab:ablation})
    \item Can the proposed improvements close the gap with Transformers in in-context recall tasks? (see \autoref{tab:recal})
    \item What is the effect of the internal optimizer on the memory? (see \autoref{fig:learnability-1})
\end{enumerate}

\head{Setup}  We train our models with training context window of size 4K using FineWeb dataset~\citep{penedo2024fineweb}. We use model size of 340M, 400M, 790M, and 1.3B parameters and train them on 15B, 15B, 30B, and 100B tokens sampled from the dataset. Baseline results are reported
by~\citet{yang2024gated, behrouz2024titans, behrouz2025Miras}. Perplexity is measured on held-out validation data. As for the downstream tasks, we evaluate trained models on Wikitext~\citep{merity2017pointer}, LMB~\citep{paperno-etal-2016-lambada}, PIQA~\citep{bisk2020piqa}, HellaSwag~\citep{zellers-etal-2019-hellaswag}, WinoGrande~\citep{sakaguchi2021winogrande},  ARC-easy (ARC-e) and ARC-challenge (ARC-c)~\citep{clark2018think}, SIQA~\citep{sap-etal-2019-social}, and BoolQ~\citep{clark-etal-2019-boolq}. Additional details about the experimental setups and other used datasets are in \autoref{app:exp-details}.

\begin{table*}[t!]
\centering
\caption{
Performance of \model{} and baselines on language modeling and common-sense reasoning tasks. Hybrid models are marked with~$^*$. The best results are highlighted \colorbox{myblue}{highlighted}. 
}\label{tab:lm_results}
\centering
\resizebox{0.9\linewidth}{!}{
\centering
\begin{tabular}{l|c c|c c c c c c c c c}
\toprule
\textbf{Model}  & \textbf{Wiki.}  &  \textbf{LMB.} &  \textbf{LMB.} & \textbf{PIQA} &    \textbf{Hella.} & \textbf{Wino.} & \textbf{ARC-e} &  \textbf{ARC-c} &  \textbf{SIQA}  & \textbf{BoolQ} &  \textbf{Avg.} \\
 & ppl $\downarrow$  &  ppl $\downarrow$  &  acc $\uparrow$  & acc $\uparrow$ &   acc\_n $\uparrow$  & acc $\uparrow$  & acc $\uparrow$ & acc\_n $\uparrow$ &  acc $\uparrow$  & acc $\uparrow$ &   $\uparrow$  \\
\midrule
\midrule
\multicolumn{12}{c}{760M params / 30B tokens} \\
\midrule
 Transformer++ & 25.21 & 27.64 & 35.8 & 66.9 & 42.2 &	51.9 & 60.4	& 32.5 &  39.5  & 60.4  & 48.69 \\
 \textsc{DeepTransformers} (ours) & 20.32 & 20.67 & 36.9 & 68.4 & 49.8 & 52.8 & 65.7 & 34.9 & 40.2 & 61.8 & 51.31 \\
 \textsc{Dot} (ours) & 19.96 & 20.15 & 39.0 & 69.1 & 50.7 & 53.1 & 66.2 & 37.0 & 40.3 & 63.7 & 52.39 \\
 \midrule
 RetNet & 26.08 & 24.45 & 34.5 & 67.2 & 41.6 &	52.1 & 63.2	& 32.8 &  38.4  & 57.9  &  48.46\\
 DeltaNet & 24.37 & 24.60 & 37.1 & 66.9 & 42.0 &	50.7 & 64.9	& 31.4 &  39.9  & 59.0  & 48.97 \\
 TTT & 24.17 & 23.51 & 34.7 & 67.3 & 43.9 & 51.0 & 64.5 & 33.8 & {40.2} & 59.6 & 47.32 \\
 Gated DeltaNet & {21.18} & {22.09} & {35.5} & {68.0} & {44.9} & {50.7} & {66.9}	& {33.1} &  {39.2}  & 59.1  & 49.69 \\
 Samba$^{*}$ & 20.63 & 22.71 & 39.7 & 69.2 & 47.4 &	52.0 & 66.9	& 33.2 &  39.0  & 61.2  & 51.08 \\
  Gated DeltaNet-H2$^*$ &  {19.88} &  20.83 &  {39.2} & 69.0 & {48.2} &	{52.6} & 67.0	& {35.5} &  {39.4}  & 61.1  & 51.49 \\
  Titans (LMM) & 20.04 & 21.96 & 37.4 & 69.3  & 48.5 & 52.3 & 66.3 & 35.8 & 40.1 & 62.8 & 51.56\\
%   \textsc{Moneta} & 21.18 & 21.94 & 38.02 & 69.55  & 49.16 & 53.01 & 67.47 & 36.09 & 40.53 & 63.18 &  52.12 \\
\textsc{Memora} &  22.28 & 22.31 & 38.2 & 67.8  & 49.3 & 53.3 & 63.6 & 36.1 & 40.9 & 63.0 & 51.52\\
  \midrule
  SWDT (ours) &  19.89 & 21.52  &  36.2   &  68.3 &  45.2 &  53.0 & 65.4  & 34.2 & 39.5 & 59.5 &  50.1\\
DLA (ours) &  23.12 & 22.09  &  36.1   &  68.0 &  47.9 &  52.7 & 65.8  & 34.6 & 39.1 & 59.6 &  50.46\\
  \midrule
  \omodel{} (ours)&  19.16 & 20.14  &  38.7   &  69.8 &  50.0 &  53.3 & 67.8  & 36.8 & 39.6 & 64.4 &  52.56\\
\model{} (ours) &   18.92 & 21.01  &  39.1   &  69.7 &  50.2 &  53.5 & 67.5  &  37.1 & 40.7 & 64.3 &  52.77\\
\model++ (ours) &   19.04 & \cellcolor{myblue}20.03  &  39.7   &  69.7 &  \cellcolor{myblue}51.1 &  53.2 & \cellcolor{myblue}68.2  & \cellcolor{myblue} 37.4 & 40.9 & 64.4 &  \cellcolor{myblue}53.09\\
\midrule
\model{} (MAG)&  \cellcolor{myblue}18.62 & 21.18  &  \cellcolor{myblue} 40.0   &  \cellcolor{myblue} 70.3 &  50.5 &  53.0 & 68.1  & 36.5 & \cellcolor{myblue} 41.2 & \cellcolor{myblue} 65.0 &  53.08\\ 
\model{} (MAL) &  19.07 & 21.46  &  38.8   &  69.2 &  50.5 &  \cellcolor{myblue}53.6 & 67.3  & 36.1 & 41.0 & 64.5 &  52.63\\ 
\midrule
\multicolumn{12}{c}{1.3B params / 100B tokens} \\
\midrule
 Transformer++ & 18.53 & 18.32 & 42.6 & 70.0 & 50.2 &	53.5 & 68.8	& 35.1 &  40.7  & 57.1  & 52.25 \\
  \textsc{DeepTransformers} (ours) & 15.67 & 12.63 & 49.4 & 72.6 & 57.0 & 58.8 & 71.1 & 37.5 & 41.6 & 61.5 & 56.19 \\
 \textsc{Dot} (ours) & 15.28 & 11.96 & 50.1 & 73.3 & 57.5 & 60.4 & 72.2 & 41.2 & 42.7 & 61.4 & 57.35 \\
 \midrule
 RetNet & 19.08 & 17.27 & 40.5 & 70.1 & 49.2 &	54.1 & 67.3	& 33.8 &  {40.8}  & {60.4}  & 52.02 \\
 Mamba2 & {16.56} & {12.56} & {45.7} & {71.9} & {55.7} &	{55.2} & {72.5}	& {37.9} &  40.2  & 60.1  & {54.89} \\
 DeltaNet & 17.71 & 16.88 & 42.5 & 70.7 & 50.9 &	53.3 & 68.5	& 35.7 &  40.2  & 55.3  & 52.14 \\
 Gated DeltaNet & {16.42} & {12.17} & {46.6} & {72.2} & {55.8} &{57.4} & {71.2}	& {38.4} &  {40.6}  & 60.2  & {55.32} \\
 Samba$^*$ & 16.13 & 13.29 & 44.9 & 70.9 & 53.4 &	55.6 & 68.8	& 36.2 &  40.0  &  {62.1}  & 54.00 \\
  Gated DeltaNet-H2$^*$ & {15.91} &{12.55} &  {48.8} & {72.2} &  {56.9} &	{57.8} & {71.4}	 &{39.1} &  {41.2}  & 61.6   & {56.18}  \\
% \textsc{Moneta} & 15.52 &  11.47  & 47.88   &   73.16 & 56.14 &  59.09  &  72.53  &  40.32 &  41.91 & 61.18 & 56.52 \\
% \textsc{Yaad} &   15.18 & 11.89 & 47.23   &  72.81  &  56.46  & 59.02  & 72.14 & 40.05 &  40.73 &  61.86 &  56.39\\
Titans (LMM) & 15.60 & 11.41 &  49.1 & 73.1  & 56.3 &  59.8 & 72.4 & 40.8 &  42.1 & 61.0 & 56.82\\
\textsc{Memora} &  15.90 & 12.04  &  48.7   &  73.1 &  56.0 &  57.4 & 71.5  & 37.9 & 40.2 & 61.3 &  55.87\\
\midrule
\omodel{} (ours) & 14.91 & 11.26 & 49.7   & 73.4 & 57.6 & 59.7 & 72.6 & 40.3 & 42.4  & 62.1 & 57.23\\
\model{} (ours)  & 14.97 & 10.98 & 50.1   & \cellcolor{myblue}73.9 & 57.3 & 60.2 & \cellcolor{myblue}72.8 & 41.0 & \cellcolor{myblue}42.9 & \cellcolor{myblue}62.8 & 57.62\\
\model++ (ours)  & \cellcolor{myblue}14.40 & \cellcolor{myblue}10.72 & \cellcolor{myblue}50.8   & 73.5 & \cellcolor{myblue}59.4 & \cellcolor{myblue}61.1 & 71.3 & \cellcolor{myblue}43.7 & 42.5 & 61.9 & \cellcolor{myblue}58.03\\
\bottomrule
\end{tabular}
}
\end{table*}

\subsection{Language Modeling and Common-Sense Reasoning}\label{sec:language-modeling}
The results for \model, and  \omodel{} as well as their corresponding baselines of SWDT, DLA, \textsc{DeepTransformer}s, and \textsc{Dot} with the size of 760M and 1.3B are reported in \autoref{tab:lm_results}.  (see \autoref{app:exp} for the results of small scale). Among non-hybrid models, including Transformer++, our \model, and  \omodel{}  achieve the best performance in both perplexity and accuracy measures. We attribute this performance to their ability to memorize the context rather than individual tokens. Comparing \omodel{} with Titans, that also uses the same momentary objective (i.e., $\ell_2$ loss), but with context window of 1, we can observe the effectiveness of having non-online learning rule. On the other hand, our models, alone without any attention, can outperform hybrid models, while their hybrid variant of MAG further improve their performance. This performance gain is also related to the use of polynomial kernels that enhance the memory capacity of the model. See \autoref{tab:ablation} for a more controlled study on the effect of different components. 

Comparing Transformer++ with our more generalized Transformers (i.e., \textsc{DeepTransformer}s, and \textsc{Dot}) we observe a consistent performance improvement. We attribute this performance to their deep memory, which makes them more powerful to model the dependencies of tokens. Comparing \textsc{Dot} with \textsc{DeepTransformer}s, we can see the advantage of \learningrule{} rule, which helps the model to better manage its memory.

\begin{table}
    \centering
    \caption{Performance of \model{} and baselines on S-NIAH task from RULER benchmark. The best results among {\colorbox{myblue}{simple}} and {\colorbox{mygreen}{hybrid}} models are highlighted.}
    \label{tab:hystack}
    \resizebox{0.8\linewidth}{!}{
    \begin{tabular}{l c c c c c c c c c c c}
    \toprule
    \multirow{2}{*}{Model} & \multicolumn{4}{c}{\textbf{S-NIAH-PK}} & \multicolumn{4}{c}{\textbf{S-NIAH-N}} & \multicolumn{3}{c}{\textbf{S-NIAH-W}} \\
    \cmidrule(lr){2-5} \cmidrule(lr){6-9} \cmidrule(lr){10-12}
    &  2K & 4K & 8K & 16K &  2K & 4K & 8K & 16K &  2K & 4K & 8K \\
    \midrule
    \midrule
       TTT  & 98.4 & 98.8 & 98.0 & 88.4 & 60.2 & 36.6 &  10.2 & 4.4 & 78.8 & 28.0 & 4.4 \\
       DeltaNet & 96.8 & 98.8 & \cellcolor{myblue}98.6 & 71.4 & 47.2 & 15.4 & 12.8 & 5.4 & 46.2 & 20.0 & 1.6 \\
       Titans (LMM) & 99.8 & 98.4 & 98.2 & 96.2 & \cellcolor{myblue}100.0 & 99.8 &  \cellcolor{myblue}93.4 & 80.2 & 90.4 & 89.4 & 85.8 \\
       \model{} & \cellcolor{myblue}100 & \cellcolor{myblue}99.2 & 98.0 & \cellcolor{myblue}97.0 & \cellcolor{myblue}100.0 & \cellcolor{myblue}100.0 & 93.0 & \cellcolor{myblue}84.0 & \cellcolor{myblue}93.2 & \cellcolor{myblue}90.6 & \cellcolor{myblue}86.2 \\
       \midrule
        Samba & 98.8 & 98.0 & 97.4 & 97.2 & 98.8 & 98.6 & 96.2 & 95.6 & 96.8 & 90.0 & 84.0\\
       Gated DeltaNet-H2$^*$ & 99.2 & 97.8 & 97.4 & 98.4 & 98.0 & 97.8 & 96.2 & 95.8 & 97.4 & 96.8 & 88.4 \\
       \model{} (MAG) & \cellcolor{mygreen}100 & \cellcolor{mygreen}100 & 99.4 & \cellcolor{mygreen}98.6 & \cellcolor{mygreen}100 & \cellcolor{mygreen}99.2 &  97.4 & \cellcolor{mygreen}97.0 &  \cellcolor{mygreen}99.4 & 98.2  & 92.4  \\
       \model{} (MAL) & 99.8 & 99.6 & 98.4 & 96.8 & 99.8  & 98.0 & 97.2  & 96.8 & 98.0 & \cellcolor{mygreen}98.4 & 92.6\\
       \textsc{DeepTransformers} & \cellcolor{mygreen}100 & \cellcolor{mygreen}100 & 98.2 & 97.8 & \cellcolor{mygreen} 100 & 98.8 & \cellcolor{mygreen} 97.8 & 94.0 & 95.8 & 92.2 & 88.4 \\
       \textsc{Dot} & \cellcolor{mygreen}100 & \cellcolor{mygreen}100 & \cellcolor{mygreen}99.6 & \cellcolor{mygreen}98.6 & \cellcolor{mygreen} 100 & 100 & \cellcolor{mygreen} 97.8 & 96.8 & 99.0 & \cellcolor{mygreen}98.4 & \cellcolor{mygreen}93.2 \\
    \toprule
    \end{tabular}
    }
\end{table}

\begin{minipage}[t!]{\textwidth}
  \begin{minipage}[t]{0.49\textwidth}
    \centering
    \includegraphics[width=0.8\linewidth]{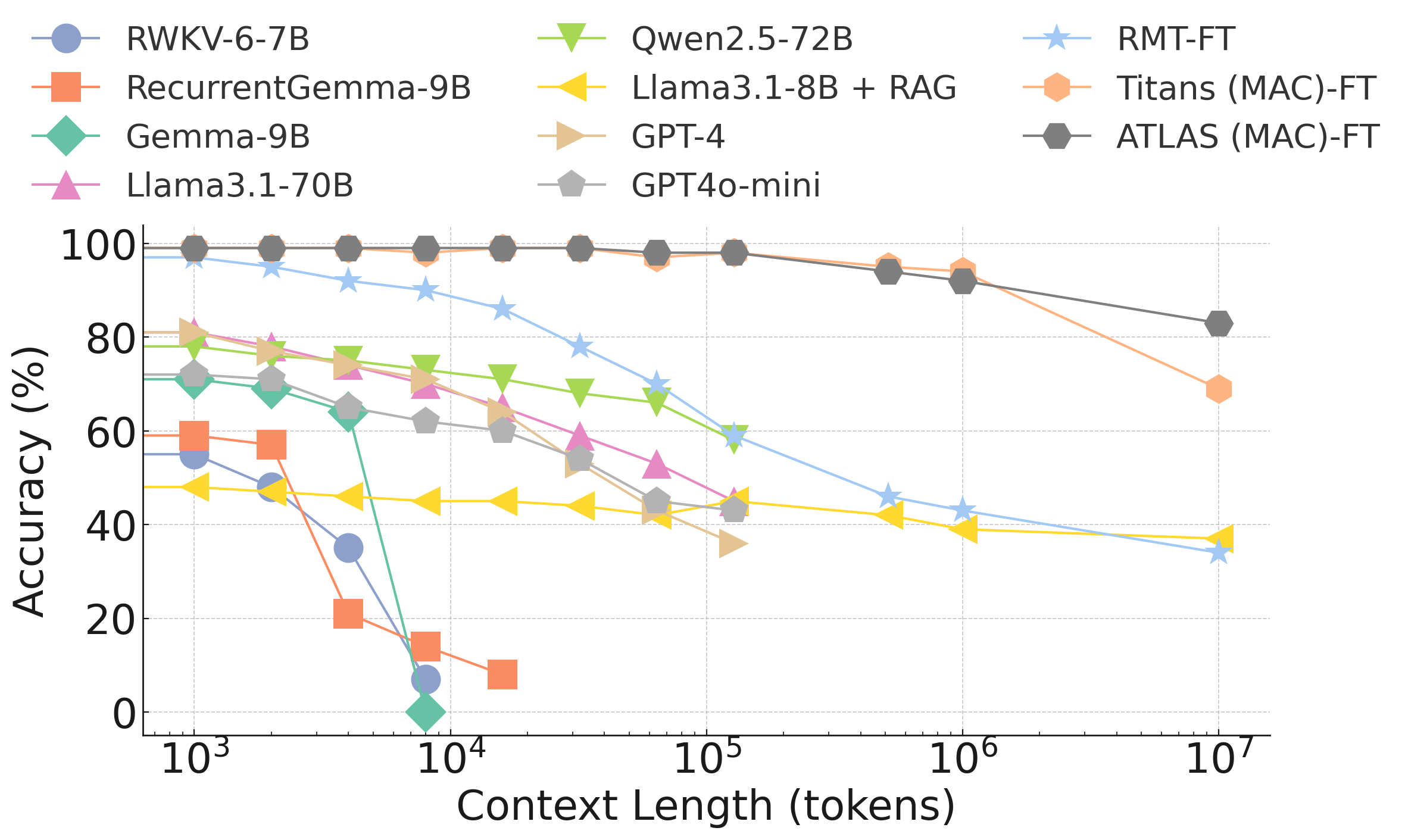}
    \captionof{figure}{Performance of \model{} and baselines on BABILong benchmark. \model{} surpasses Titans performance and effectively scale to 10M context length in this task.}
    \label{fig:babilong}
    \end{minipage}~
    \hfill~
    \begin{minipage}[t]{0.49\textwidth}
    \includegraphics[width=0.8\linewidth]{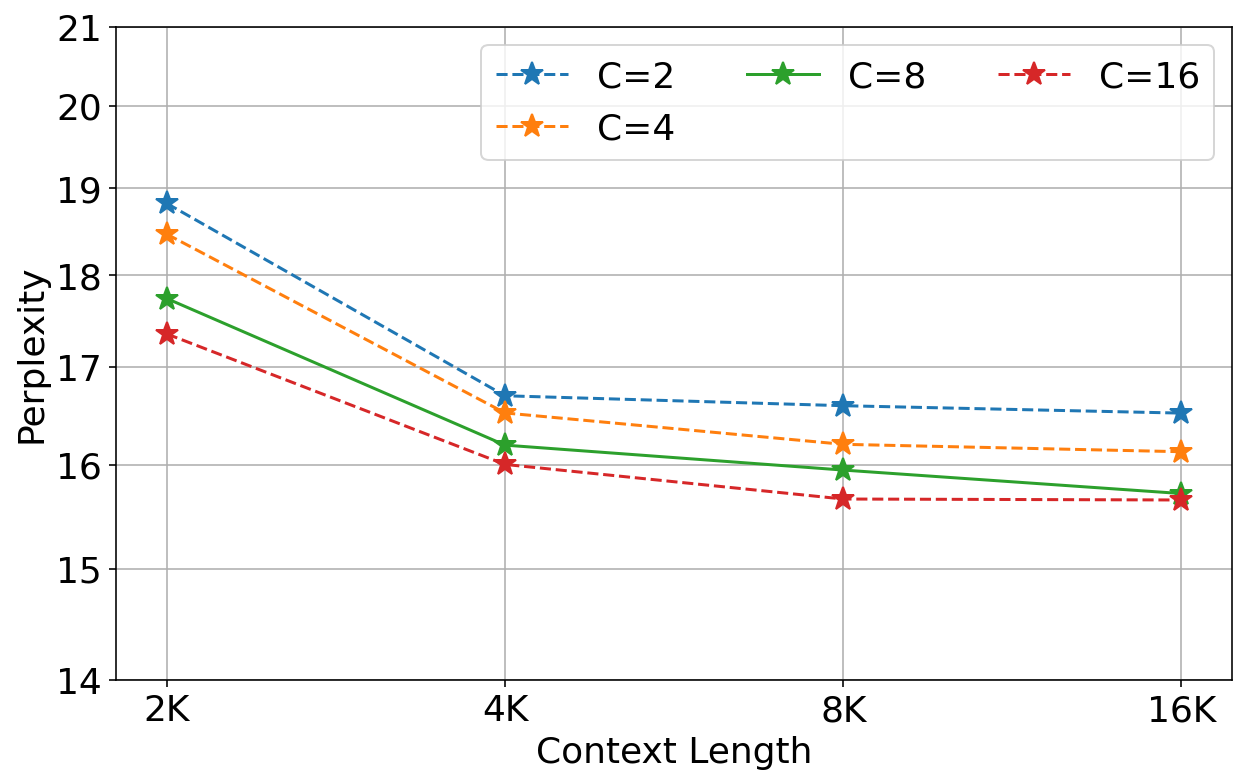}
    \captionof{figure}{The effect of local context length (i.e. $c$) on the performance of \omodel{} with different global context length.}
    \label{fig:effect-c}
    \end{minipage}
\end{minipage}

\subsection{Long Context: Needle In a Haystack}\label{sec:niah}
One of our main motivations to design \model{} is to enhance the performance of long-term neural memory module in long context tasks. Accordingly, to evaluate the effectiveness of our designs for improving the effective context length and memory capacity, we perform an experiment on needle-in-haystack tasks of RULER~\citep{hsieh2024ruler} benchmark. The performance of \model{} and its hybrid variants, as well as our Transformer-like architectures and baselines are reported in \autoref{tab:hystack}. \model{} shows very good performance compared to the recurrent baselines, outperforming modern recurrent neural networks such as Titans and DeltaNet. Its hybrid variants further improve its effective context length, effectively extrapolating to sequences with $\times 4$ of their training context size. We attribute this performance to the proposed enhancements for the capacity of the memory. We further perform ablation studies to validate this claim. Also, our Transformer-like architectures outperforms the baselines, even our hybrid variants of \model{} in longer contexts. This shows the importance of exponential feature mapping in longer sequences.

\subsection{Long Context: BABILong Benchmark}\label{sec:babilong}
To compare the effectiveness of \model{} with Titans \citep{behrouz2024titans} in ultra-large sequences, we further evaluate \model's performance on BABILong benchmark~\citep{kuratov2024babilong}. In this experiment, we follow \citet{behrouz2024titans} and use MAC architecture but without persistent memory tokens. We also follow the original setup in the benchmark and fine-tune our model. The results are reported in \autoref{fig:babilong}. While \model{} shows competitive and on par performance with Titans until 1M context length, the performance of Titans drops in 10M. \model, however, maintains its performance and achieve +80\% accuracy in 10M context length. We attribute this to more powerful memory; in terms of (1) memory management (i.e., the use of Muon), (2) better memory capacity due to polynomial kernels, and (3) its nature to memorize the context, instead of individual tokens.

In previous sections, we show the effectiveness of our Transformer-like architectures (i.e., \textsc{DeepTransformer}s and \textsc{Dot}) in both language modeling and long-context needle-in-haystack tasks. From now on, we focus on our recurrent architectures (i.e., \model, and \omodel) to show the importance of presented improvements.

\subsection{Learnability Experiments}
We have also performed some small-scale experiments to analyze the function-learning capability of small MLPs in an online fashion. In this setting, we have a sequence of tuples $(i_1, o_1), \ldots (i_t, o_t)$ with both $i_j, o_j \in \mathbb{R}^d$ for all $j$. We train an MLP $\mathcal{M}$ in an online fashion to minimize $\text{loss}_j = \|{i_j - o_j}\|_2^2 / \|{o_j}\|_2^2$ -- specifically, we compute the gradient at time step $j$ as $\nabla_{\mathcal{M}\text{.params}} \text{loss}_j$  and use standard optimizers such as Adam, Rmsprop and SGD to update the parameters. Such experiments help us understand the representation power of the models we use to represent memory and the power of optimization algorithms to quickly learn the underlying sequence mapping.

\begin{figure*}
  \begin{center}
  \begin{subfigure}[t]{.39\textwidth}
    \centering
    \includegraphics[width=\linewidth]{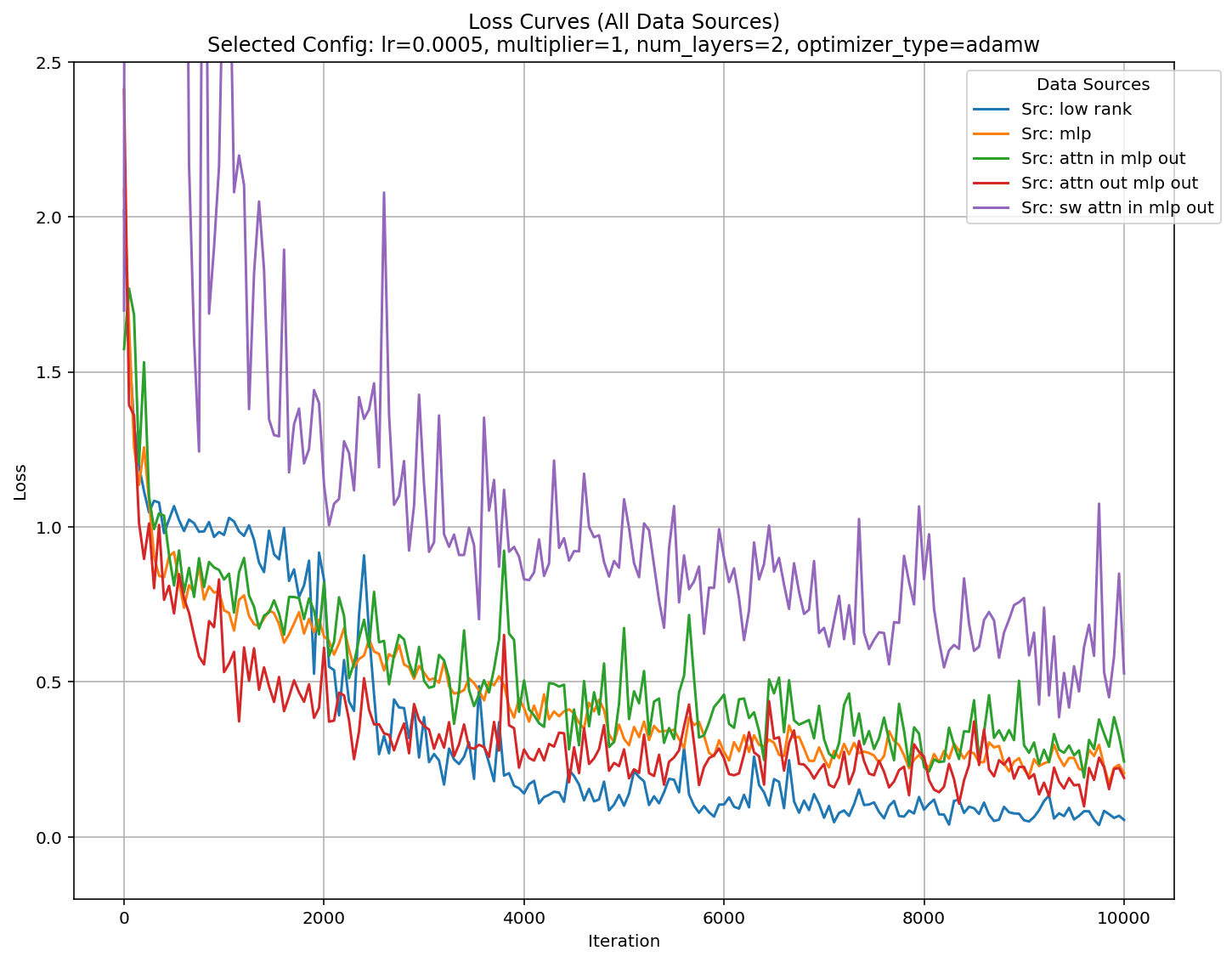}
    \caption{$\mathcal{M}$ with 2 hidden layers and no expansion.}
  \end{subfigure}
    \hspace{3ex}
  \begin{subfigure}[t]{.39\textwidth}
    \centering
    \includegraphics[width=\linewidth]{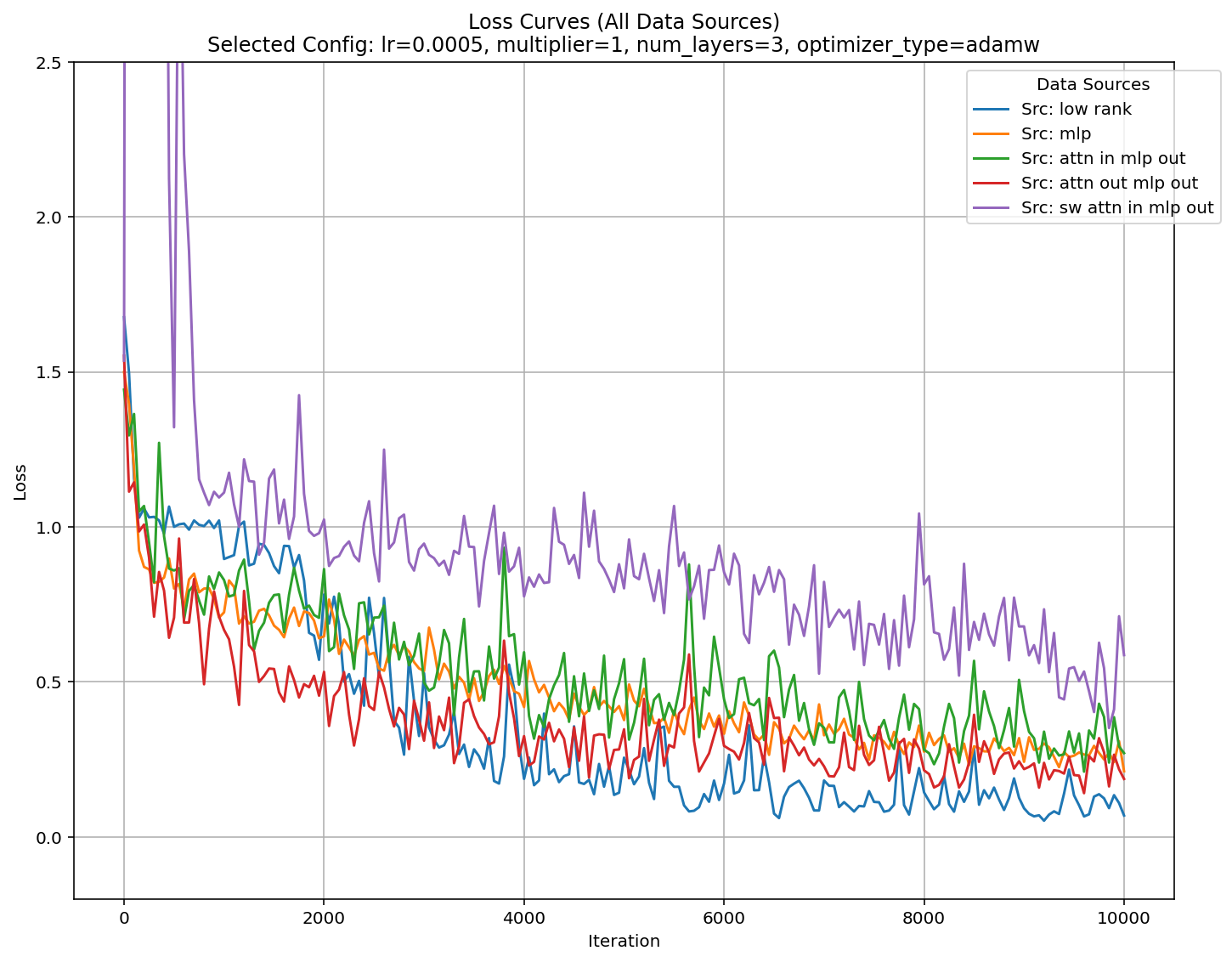}
    \caption{$\mathcal{M}$ with 3 hidden layers and no expansion.}
  \end{subfigure}
  \medskip
  \begin{subfigure}[t]{.39\textwidth}
    \centering
    \includegraphics[width=\linewidth]{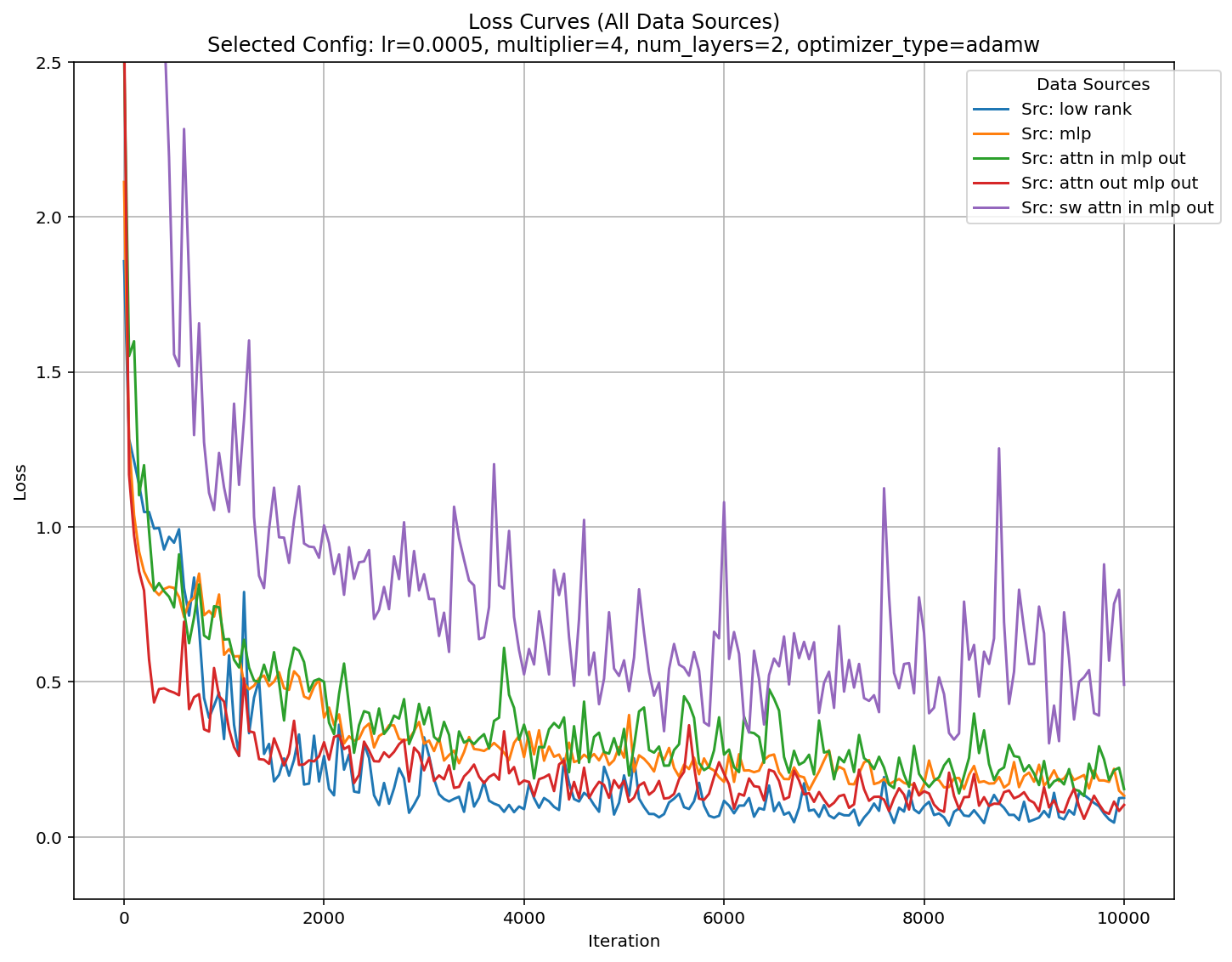}
    \caption{$\mathcal{M}$ with 2 hidden layers and 4x expansion.}
  \end{subfigure}
  \hspace{3ex}
  \begin{subfigure}[t]{.39\textwidth}
    \centering
    \includegraphics[width=\linewidth]{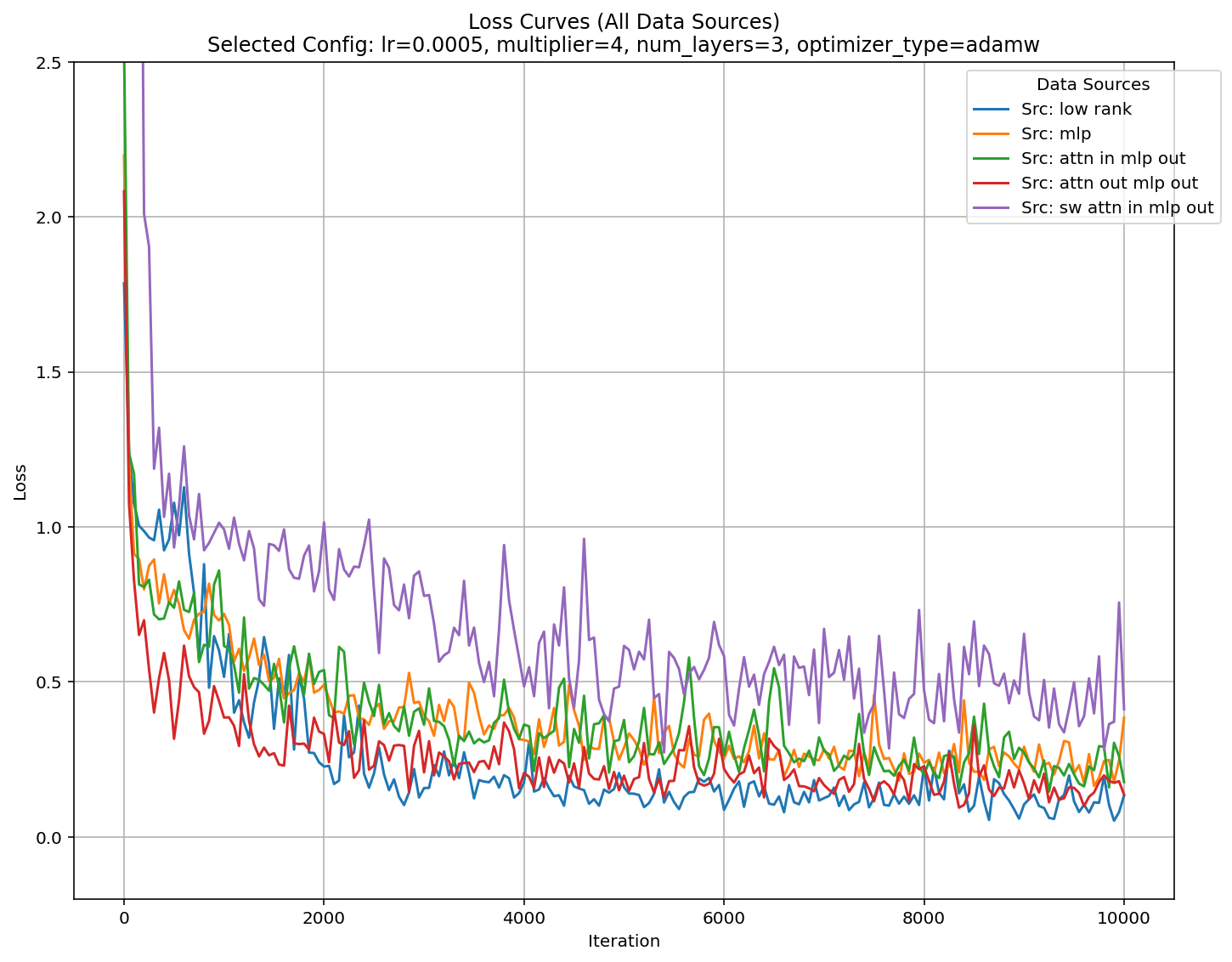}
    \caption{$\mathcal{M}$ with 3 hidden layers and 4x expansion.}
  \end{subfigure}
  \vspace{-2ex}
  \caption{Loss curves for different setting with various hyperparameters}
  \label{fig:learnability-1}
  \end{center}
\end{figure*}

We study five different sequence to sequence functions:
\begin{enumerate}
    \item \textbf{Low Rank Mappings}: We sample a random low rank matrix $\mathbf{W} = \mathbf{X}\mathbf{Y}$ with $\mathbf{X} \in \mathbb{R}^{d \times k}$ and $\mathbf{Y} \in \mathbb{R}^{k \times d}$. We then sample $i_1, \ldots, i_t$ randomly from a Gaussian distribution and set $o_j = \mathbf{W}^{\mathsf{T}} \cdot i_j$ for all $j \in [t]$.
    \item \textbf{MLP Mappings}: We sample an MLP $\mathcal{M}$ with 1 input, 1 hidden and 1 output layer which uses GELU non-linearity. We set the hidden dimension to $d$ so that there is no expansion. We then sample $i_1, \ldots, i_t$ randomly from a Gaussian distribution and then set $o_j = \mathcal{M}(i_j)$ for all $j \in [t]$.
    \item \textbf{Attention+MLP Mapping}: We sample $(i_1, \ldots, i_t)$ from a Gaussian distribution and an MLP $\mathcal{M}$ as above. We additionally sample three $d \times d$ matrices $\mathbf{W_Q}$, $\mathbf{W_K}$ and $\mathbf{W_V}$ and compute $q_j = \mathbf{W_Q}^{\mathsf{T}} \cdot i_j$, $k_j = \mathbf{W_K}^{\mathsf{T}} \cdot i_j$ and $v_j = \mathbf{W_K}^{\mathsf{T}} \cdot i_j$ for all $j \in [t]$. We then compute $o'_1, \ldots, o'_t$ as outputs of the causal masked attention mechanism applied on $\{q_j\}_{j \in [t]}, \{k_j\}_{j \in [t]}, \{v_j\}_{j \in [t]}$ and finally compute $o_j = \mathcal{M}(o_j)$.
    \item \textbf{Attention Outputs as Inputs}: We do the same as above except that we output $o'_j$ as the input sequence and $o_j$ as the output sequence.
    \item \textbf{Sliding Window Attention + MLP Mapping}: We do the same as in \textbf{Attention + MLP Mapping} setting except that we use a sliding window attention instead of full attention. We use a sliding window of 512 in our experiments.
\end{enumerate}
Note that the settings 3 and 5 are much harder to learn since they require (partially) memorizing the previous inputs and outputs to be able to learn the function that maps $i_j$ to $o_j$, whereas the settings 1, 2 and 4 do not need to memorize the previous input-output pairs and just need to learn the underlying low-rank matrix or the MLP that maps the inputs to outputs.

The setting 4 is slightly different to setting 2 in that the inputs are not-independent at each time step and are correlated by the attention mechanism we use to compute the inputs. Thus a strong learning algorithm maybe able to utilize the underlying correlations to learn the mapping faster in setting 4 versus setting 2.

\begin{table*}
    \centering
    \caption{Performance of \model, \omodel, and baselines on the synthetic benchmark of MAD~\citep{poli2024mechanistic}. \model{} outperforms all the baselines, including Transformers.}
    \label{tab:MAD}
    \resizebox{0.8\linewidth}{!}{
    \begin{tabular}{l c c c c c c}
    \toprule
         &  \multirow{2}{*}{Compression} & \multirow{2}{*}{(Noisy) ICR} & \multirow{2}{*}{Fuzzy ICR} & Selective & \multirow{2}{*}{Memorization}  & \multirow{2}{*}{Average}\\
         &  & & & Copying & \\
         \midrule
         \midrule
        Transformers & 49.4 & 100 & 48.2 & 95.9 & 83.8 & 75.46\\
         Gated DeltaNet & 44.8 & 100 & 32.5 & 96.2 & 81.7 & 71.04\\
         Titans & 49.6 & 100 & 49.7 & 99.4 & 83.5 & 76.44\\
         \midrule
         \omodel{} (ours) & 50.9 & 100 & 54.2 & \cellcolor{myblue} 99.6 & 90.2 & 78.98\\
         \model{} (ours) & \cellcolor{myblue}51.6 & \cellcolor{myblue} 100 &  \cellcolor{myblue} 54.9 & \cellcolor{myblue} 99.6 & \cellcolor{myblue} 91.4 & \cellcolor{myblue}79.50\\
    \toprule
    \end{tabular}
    }
\end{table*}

\begin{minipage}[t!]{\textwidth}
  \begin{minipage}[t]{0.60\textwidth}
    \centering
    \captionof{table}{The performance of our models (\model, and \omodel) compared to baselines. While still Transformers achieve the best results in in-context recall tasks, our design of context memorization and polynomial feature maps can close the gap with Transformers. }
    \label{tab:recal}
    \resizebox{\linewidth}{!}{
    \begin{tabular}{l c c c c c c c}
    \toprule
         &  \multirow{1}{*}{SWDE} & \multirow{1}{*}{NQ} & \multirow{1}{*}{DROP} & FDA & SQUAD & \multirow{1}{*}{TQA} & Average \\
         \midrule
         \midrule
        Transformers & \cellcolor{myblue}84.9 & \cellcolor{myblue}23.0 & \cellcolor{myblue}28.4 & \cellcolor{myblue}72.5 & \cellcolor{myblue}48.1 & \cellcolor{myblue}64.4 & \cellcolor{myblue} 53.55\\
         Gated DeltaNet & 63.2 & 19.1 & 26.7 & 33.4 & 39.6 & 59.7 & 40.28\\
         Titans & 65.1 & 20.7 & 27.2 & 37.3 &  42.6 & 61.0 & 42.31\\
         \midrule
         \omodel{} (ours) & \underline{67.4} & 21.1  &  27.2  & 39.0 & 43.2 & 60.9 & 43.13\\
         \model{} (ours) & 66.8  & \underline{21.9}  &  \underline{27.4}  & \underline{40.7} & \underline{44.1} & \underline{61.3} & \underline{43.70}\\
    \toprule
    \end{tabular}
    }
  \end{minipage}~\
  \hfill~
  \begin{minipage}[t]{0.40\textwidth}
    \centering
    \captionof{table}{Ablation Study on \model. All components of \model{} are positively contributing to its performance. }
    \label{tab:ablation}
    \resizebox{\linewidth}{!}{
    \begin{tabular}{l c c}
    \toprule
    \multirow{2}{*}{Model}     & Language Modeling & C.S. Reasoning \\
    & ppl $\downarrow$ & acc $\uparrow$ \\
    \midrule
    \midrule
    \model{}         &   19.97  &  52.77  \\ 
    \midrule
    \hspace{6pt}+{Gated MLP Memory} & 19.53 & 53.09                \\
    \midrule
    \hspace{6pt}+\texttt{Attn} (MAG) & 19.90 & 53.08                \\
    \hspace{6pt}+\texttt{Attn} (MAL) & 20.26 & 52.63               \\
    \midrule
    Linear Memory          & 21.03   &  49.74    \\
    w/o Muon      & 19.65   &  52.56    \\
    $c = 1$                & 21.98   &  49.26    \\
    w/o Polynomial Mapping  &  22.14   &  50.57   \\
    \toprule
    \end{tabular}
    }
  \end{minipage}
\end{minipage}

We set $d = 256$ and show the loss curves vs sequence position for all the five settings with function learning MLP $\mathcal{M}$ being defined and trained with different settings in Figure~\ref{fig:learnability-1}. We can see that in all the settings, the model learns non-trivial mappings from inputs to outputs with the $loss_j = \|i_j - o_j\|_2^2/\|o_j\|_2^2$ being smaller than $1$ eventually. Most notably, the correlations in inputs in setting 4 induced by the attention mechanism makes the model quickly learn the mapping compared to in setting 2 and the models usually learn the best in setting 1 which is the least complex function.  

The models do the worst in settings 3 and 5 which require the models to (partially) memorize the inputs and outputs to learn the attention mechanism outputs. Surprisingly, the models learn to do better in setting 3 vs setting 5, when we would expect that capacity requirement for setting 3 to be higher than setting 5. We hypothesize that the learning algorithm is unable to make the model `forget' old inputs which makes the loss worse in sliding window setting when compared to global attention setting. A caveat of our analysis is that, the attention computation is done on randomly initialized vectors and hence the attention matrix is usually not spiky, unlike in the attention matrix for trained set of query, key and value vectors in LLMs. This leads to attention outputs being close to the mean of value vectors in the context.

\subsection{Additional Experiments: In-context Recall, MAD Synthetic Benchmark, and Associative Recall}\label{sec:retrieval-tasks}
In this section, we first evaluate the performance of our models on MAD benchmark, a synthetic benchmark that evaluate the performance of models in recall, memorization, compression, and copying tasks~\citep{poli2024mechanistic}. The results are reported in \autoref{tab:MAD}. \model{} achieves the best results in all aspects, particularly in memorization, which shows the importance of its components for enhancing the memory capacity.

In-context recall tasks is one of the most challenging benchmarks for recurrent neural networks. In this section, we follow \citet{arora2024simple} and perform experiments on SWDE~\citep{lockard2019openceres}, NQ~\citep{kwiatkowski2019natural}, DROP~\citep{dua2019drop}, FDA~\citep{arora2023language}, SQUAD~\citep{rajpurkar2016squad}, and TQA~\citep{kembhavi2017you} to evaluate and compare the performance of \model{} with baselines and Transformers. The results are reported in \autoref{tab:recal}. While Transformers still achieve the best results in in-context recall tasks, \model{} and \omodel{} shows competitive performance and performs better than state-of-the-art recurrent models. We again attribute this performance to better memory management and capacity.

\begin{minipage}[t!]{\textwidth}
    \begin{minipage}[t]{0.33\textwidth}
    \includegraphics[width=\linewidth]{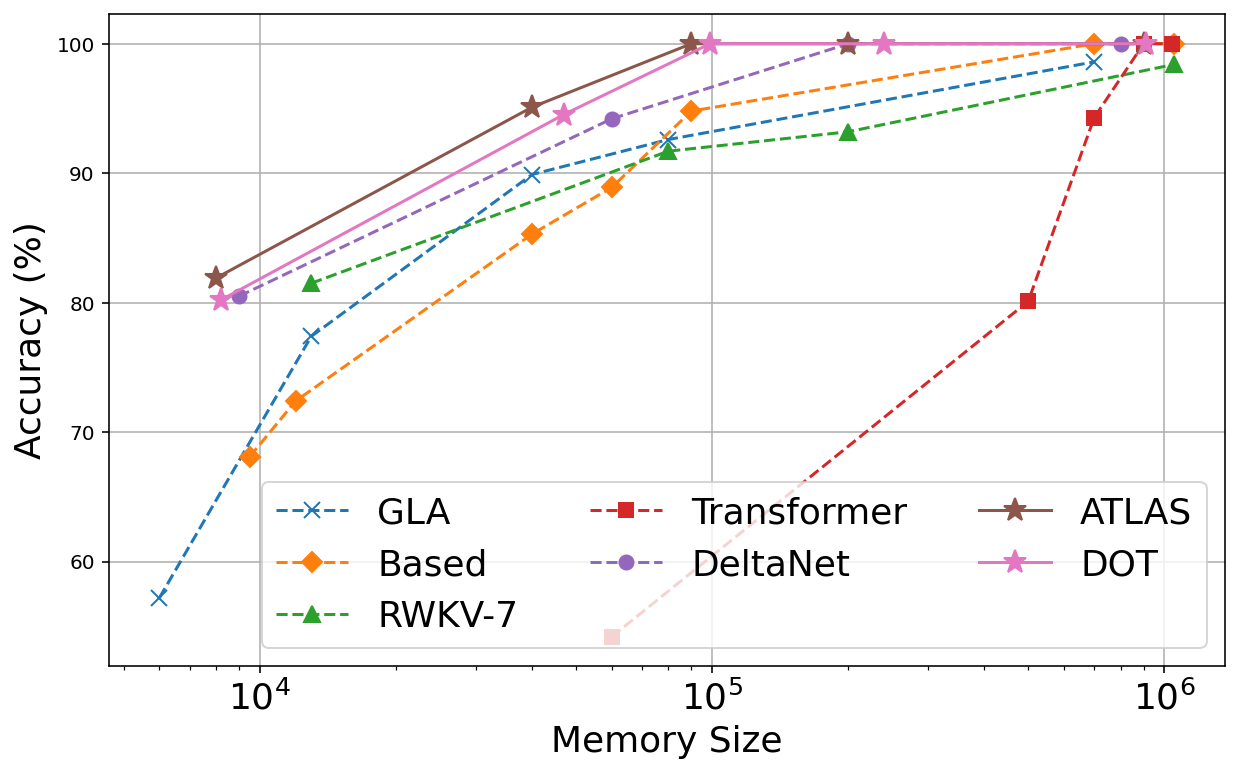}
    \captionof{figure}{The results for associative memory recall.}
    \label{fig:AMR}
    \end{minipage}~
    \hfill~
      \begin{minipage}[t]{0.66\textwidth}
    \centering
    \includegraphics[width=0.5\linewidth]{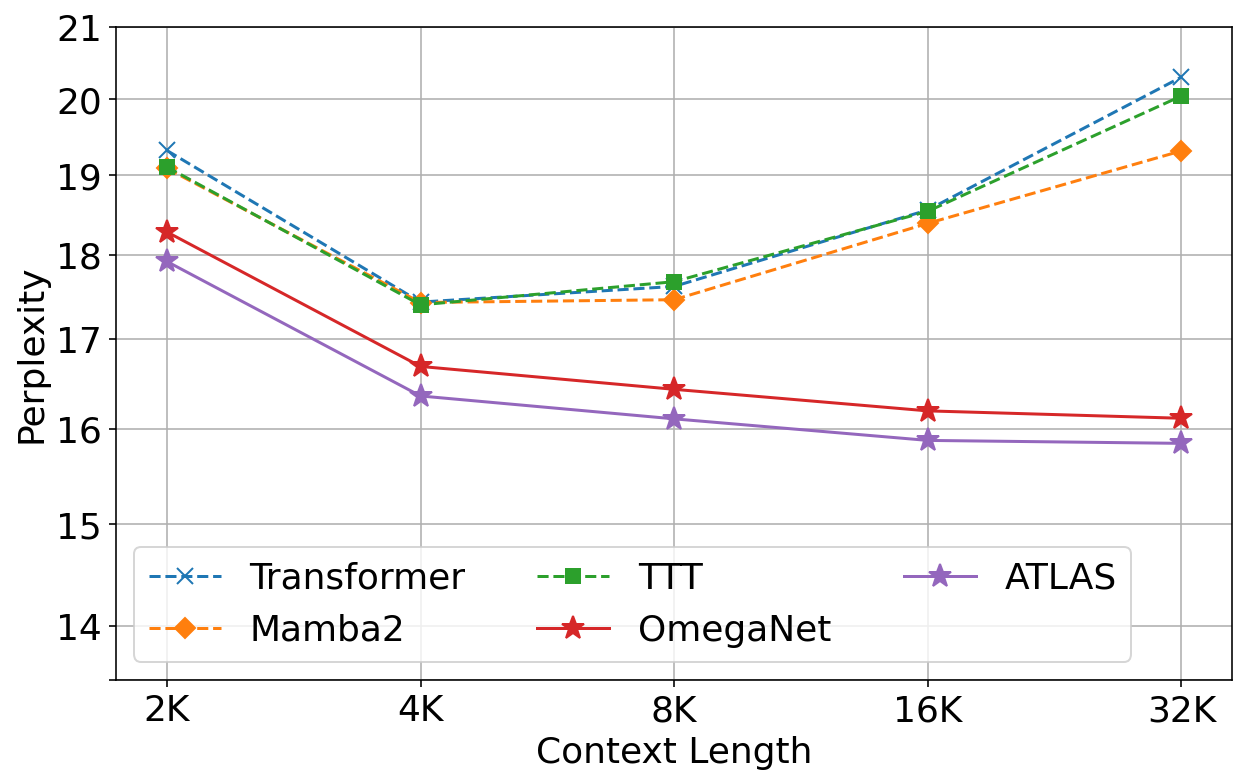}~
    \includegraphics[width=0.5\linewidth]{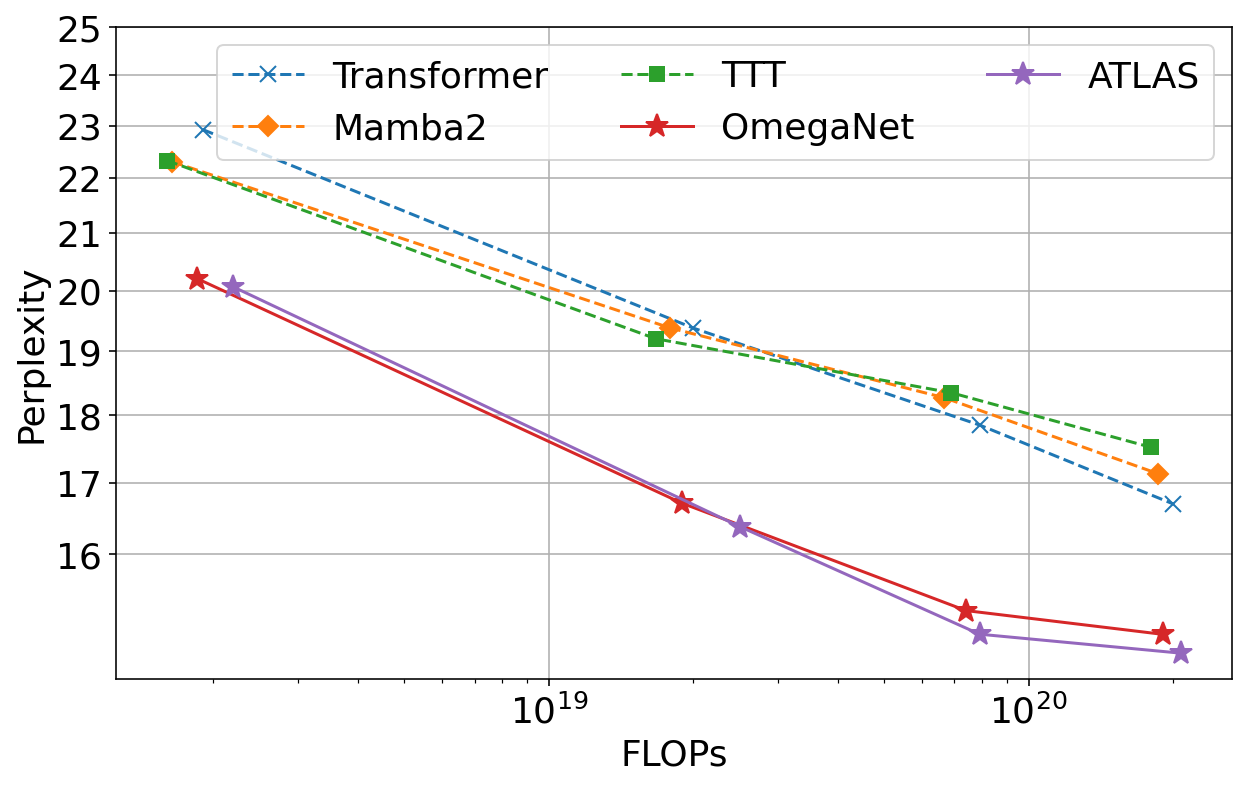}
    \captionof{figure}{Scaling patterns of \model, and \omodel{} with respect to (Left) training context length, and (Right) FLOPs. }
    \label{fig:scaling}
    \end{minipage}
\end{minipage}

Finally, following \citet{yang2024parallelizing} and \citet{arora2023zoology} we evaluate the performance of \model{} and \textsc{Dot} in Multi-Query Associative Recall (MQAR) task~\citep{arora2023zoology}. The results are reported in \autoref{fig:AMR}. Both models show good performance compared to baselines and \model{} achieve the best performance per memory size compared to state-of-the-art models such as DeltaNet~\citep{yang2024parallelizing}.

\subsection{Ablation Study and Scaling Patterns}
In this section, we perform an ablation study on the differernt components of \model, and also evaluate its scaling patterns with respect to the number of parameters and also the context length of the training. The results for ablation study are reported in \autoref{tab:ablation}. The results show that: (1) more powerful memory architectures such as gated MLP can further enhance the performance of \model; (2) The hybrid variants further improve the performance, where MAG shows better improvement compared to MAL architecture; (3) Polynomial mappings as well as deep memory are particularly important when we use context memorization (i.e., \learningrule{} rule). \autoref{fig:effect-c} also shows the effect of local context length (i.e., $c$) on the performance of the model. With the increase of $c$ we can achieve better performance, mainly due to the gating parameters of $\gamma$ that can prune the context, whenever it is needed.

\head{Model Size}
\autoref{fig:scaling} shows the scaling pattern of \model, and \omodel, with respect to number of parameters and compared to baseline. Both models achieve a good scaling pattern with increasing the model size, achieving lower perplexity in all scales compared to baselines.

\head{Context Length}
\autoref{fig:scaling} shows the scaling pattern of \model, and \omodel, with respect to the context length and compared to baseline. Both models due to high memory capacity can scale well, when increasing the context length.

\section{Conclusion}\label{sec:concolusion}
% This work discusses three different aspects to enhance the design of recurrent neural networks . To overcome these challenges, we present \learningrule{} rule,

We introduced \model{}, a new long-term memory module designed to address the core limitations of modern recurrent models in long-context understanding: limited memory capacity, online-only updates, and weak memory management. Our proposed sliding window learning rule, higher-order feature mappings, and advanced memory optimizers offer a principled and scalable approach to overcoming these challenges. Empirically, our models—\omodel, \model, \textsc{DeepTransformer}s, and \textsc{Dot}—achieve consistent improvements over Transformers and recent RNN variants across diverse benchmarks. Theoretically, we provided insight into memory capacity and optimization dynamics, offering explanations for the context length limitations observed in prior works.

\newpage
\printbibliography
\newpage
\appendix

\section{Additional Related Work}\label{app:rw}

\head{Modern Linear Recurrent Neural Networks\footnote{Note that here the term ``linear'' refers to their fast training and inference procedures. This does not refer to their recurrence formula as some models like Titans~\citep{behrouz2024titans}, \textsc{Yaad, Moneta, Memora}~\citep{behrouz2025Miras}, and TTT~\citep{sun2024learning} are based on \emph{non-linear} recurrence but fast at training and inference.}}
Recent research endeavors have concentrated on mitigating the quadratic computational complexity and inherent limitations of Transformer models in processing long-context sequences. This has led to the development of efficient recurrent alternatives, primarily motivated by their rapid inference and training capabilities~\citep{tiezzi2024resurgence}. Initial advancements in this domain, exemplified by models such as RetNet~\citep{sun2023retentive}, RWKV~\citep{peng2023rwkv}, and S5~\citep{smith2023simplified}, employed data-independent transition matrices coupled with Hebbian-like update mechanisms. Subsequently, a second generation of models emerged, incorporating input-dependent parameters within these linear architectures (e.g., linear RNNs~\citep{hasani2023liquid, smith2023simplified}, RWKV6~\citep{peng2024eagle}). These models also explored more expressive memory updating rules, notably those based on the delta rule~\citep{peng2025rwkv7, schlag2021linear, yang2024parallelizing, yang2024gated, liu2024longhorn}. Further evolution in this line of research has extended these memory architectures to deeper models, while concurrently utilizing delta-rule-like update mechanisms~\citep{sun2024learning} or data-dependent momentum-based update rules with forget gating~\citep{behrouz2024titans}. More recently, to augment the performance of delta-rule-based sequential models, \citet{siems2025deltaproduct} have proposed the application of multiple gradient descent updates per token, thereby yielding more expressive sequence models, particularly in state tracking tasks. In addition to the above fast linear recurrent sequence models, several studies have focused on RNNs with non-linear recurrence~\citep{behrouz2025Miras, csordas2024recurrent, merrill2024the, lim2024parallelizing,  schone2025implicit, karami2025lattice, von2023uncovering, gonzalez2024towards}, and how their training can be faster~\citep{gonzalez2024towards, lim2024parallelizing, schone2025implicit}.

\head{Fast Weight Programs}
The conceptualization of linear layers as key-value associative memory systems can be traced back to Hopfield networks~\citep{hopfield1982neural}. This concept was subsequently developed in the context of fast weight programmers, wherein dynamic fast programs are integrated into recurrent neural networks to serve as writable memory stores~\citep{schlag2021linear, schmidhuber1992learning, schmidhuber1993reducing}. Among the learning paradigms for such systems, Hebbian learning~\citep{hebb2005organization} and the delta rule~\citep{prados1989neural} have emerged as the most prominent. Both learning rules have been the subject of extensive investigation within the existing literature~\citep{munkhdalai2017neural, schmidhuber1992learning, munkhdalai2019metalearned, schlag2021linear, irie2021going, yang2024parallelizing, yang2024gated}.

\head{Hopfield Networks} 
Our formulation is architecturally founded upon the broad concept of associative memory, wherein the primary objective is to learn an underlying mapping between keys and values. Seminal work by \citet{hopfield1982neural} on Hopfield Networks introduced one of the earliest neural architectures explicitly based on associative memory, defining it through the minimization of an energy function for storing key-value pairs. Although traditional Hopfield networks have seen diminished applicability in recent years, primarily due to constraints in vector-valued memory capacity and the nature of their energy function, several contemporary studies have focused on enhancing their capacity through various methodologies. These include efforts by \citet{krotov2021hierarchical}, \citet{li2024expressive}, and \citet{krotov2016dense}. Notably, extensions to the energy function of these models, often incorporating exponential kernels, have been explored~\citep{krotov2016dense, lucibello2024exponential}. Furthermore, the relationship between these modernized Hopfield networks and Transformer architectures has been a subject of recent investigation~\citep{ramsauer2021hopfield, hu2024provably}.

\section{\textsc{Miras} Framework}\label{app:miras}
As discussed earlier, \citet{behrouz2025Miras} formalized the concept of associative memory as:
\begin{dfn}[\citet{behrouz2025Miras}] \label{dfn:associative-memory2}
Given a set of keys $\mathcal{K} \subseteq \mathbb{R}^{d_k}$ and values $\mathcal{V} \subseteq \mathbb{R}^{d_v}$, associative memory is an mapping $\M: \mathcal{K} \rightarrow \mathcal{V}$. Learning the associative memory is based on an objective $\mathcal{L}$, called \emph{Attentional Bias}, that determines the type of memory and its priorities:
\begin{align}\label{eq:attentional-bias-loss2}
    \M^* = \arg\min_{\M}\quad \mathcal{L}(\M(\mathcal{K}); \mathcal{V}).
\end{align}
\end{dfn}
Optimizing this objective using an iterative algorithm (e.g., gradient descent) results in the memory update rule. Thus, the sequence model is a meta in-context learner with two  optimization levels:
\begin{enumerate}
    \item \textcolor{c3}{Inner Loop}: Where parameters of the memory module are optimized (i.e., $\boldsymbol{\theta}_{\M} = \{W_1, W_2, \dots, W_{\mathcal{L}_{\M}, \dots}\}$). In the inner optimization loop, all other parameters from the model are considered hyperparameters and are fixed and \emph{not} optimized.   
    \item \textcolor{c3}{Outer Loop}: Where all other parameters of the model are optimized, such as linear projections, MLP layers, convolutions, etc. 
\end{enumerate}

\subsection{Examples}
As an example, one can define the linear attention as the optimization of dot-product similarity with gradient descent: i.e., $\tilde{\ell}_t := \inner{\M_{t-1} \vk_t}{\vv_t}$. That is,
\begin{align}
    \M_t &= \M_{t-1} - \eta_t \nabla  \tilde{\ell}_t(\M_{t-1}; \vk_t, \vv_t) = \M_{t-1} - \eta_t \nabla  \inner{\M_{t-1} \vk_t}{\vv_t} \\
    &= \M_{t-1} + \eta_t \vv_t \vk_t^{\top}.
\end{align}
As an another example, if we use regression loss, instead of the dot-product similarity, we can obtain the DeltaNet~\citep{schlag2021linear}:
\begin{align}
    \M_t &= \M_{t-1}  - \eta_t \nabla \| \M_t \vk_t - \vv_t \|^{2}_2 = \mathbf{I} - \eta_t \vk_t \vk_t^{\top} \M_{t-1} + \vv_t \vk_t^{\top}.
\end{align}

\section{Supporting Proofs}
\Capacity*
\begin{proof}
Let $K = [\vk_1 \cdots \vk_m] \in \mathbb{R}^{d_k \times m}$ and $V = [\vv_1 \cdots \vv_m] \in \mathbb{R}^{d_v \times m}$. The optimization problem becomes minimizing the Frobenius norm $\|\M K - V\|_2^2$. Exact memorization requires solving the linear system $\M K = V$.

Vectorizing the expression yields the system $(K^\top \otimes I_{d_v}) \mathrm{vec}(\M) = \mathrm{vec}(V)$, which has $m d_v$ scalar equations in $d_k d_v$ unknowns. When the keys are linearly independent, $\operatorname{rank}(K) = m$, and hence the system matrix has full row rank $m d_v$. Solvability thus requires $m d_v \le d_k d_v$, or equivalently $m \le d_k$. This matches classic results on the storage capacity of linear associative memories such as the Willshaw model and Hopfield networks, where capacity is tied to the rank of the input embedding \citep{willshaw1969non, hopfield1982neural}.

When $m \le d_k$ and $K$ has full column rank, one can construct an exact interpolating solution via the Moore–Penrose pseudoinverse: $\M^* = V K^\top$. Then $\M^* K = V K^{\top} K = V$, achieving zero training error. Thus the upper bound is tight.

Moreover, full-batch gradient descent on this objective with step size $0 < \eta < 2 / \lambda_{\max}(K K^\top)$ yields iterates $\M_{t+1} = \M_t - \eta (\M_t K - V) K^\top$, which converge to the minimum-norm interpolating solution $\M^\dagger = V K^\top$ when $m \le d_k$. This is a well-known implicit bias of gradient descent in overparameterized linear models \citep{satpathi2021dynamics}.

Finally, the same rank-based constraint governs the capacity of linear or multi-head attention modules. In such architectures, the output context matrix has rank at most $\operatorname{rank}(K) \le d_k$, which directly limits their expressivity. Recent analyses identify this “low-rank bottleneck” as a capacity-limiting effect in Transformers \citep{bhojanapalli2020low}.
\end{proof}

\DeepMemory*
Early theoretical works established that even simple network architectures can memorize a significant number of input-output mappings, with capacity often related to the number of network parameters (e.g., weights and biases) and the input dimensionality \citet{cover1965geometricalproperties, baum1988capabilitiesmultilayer,  huang2003learningcapability}. For instance, ~\citet{baum1988capabilitiesmultilayer} demonstrated that $\left\lceil \frac{N}{d} \right\rceil$ neurons are sufficient for a single-hidden-layer network with threshold units to memorize $N$ input-label pairs from $\mathbb{R}^d$.

Networks employing Rectified Linear Units (ReLUs), exhibit a piecewise affine behavior. The input space is partitioned into numerous linear regions, and within each region, the network computes a distinct affine transformation~\citet{montufar2014numberlinearregions, pascanu2014numberresponseregionsdeep}. This structure is pivotal for analyzing their expressive power and storage capacity. The precise relationship between depth, width, the number of linear regions, and the ultimate capacity to store specific key-value associations, especially with constraints like linearly independent keys, remains an active area of research.

\begin{proof}
% We establish the stated \emph{upper} and \emph{lower} bounds separately.

Let $m$ denote the number of $(\vk_i, \vv_i)$ pairs memorized exactly by $\M$, and assume the keys $\{\vk_i\}_{i=1}^m \subset \R^{d_k}$ are linearly independent. Let $d_h^{(0)} := d_k$, $d_h^{(\cL_\M)} := d_v$, and for each layer $1 \le \ell \le \cL_\M$, define $W^{(\ell)} \in \R^{d_h^{(\ell)} \times d_h^{(\ell-1)}}$. Biases are omitted for simplicity.

Since $\sigma(x) = \max(0,x)$ is piecewise linear, the composition of linear maps and ReLU activations yields a piecewise affine function. For any fixed activation pattern (i.e., fixed sign of pre-activations), the MLP acts as:

\begin{align*}
\M(\cdot) = A \cdot + B, \quad \text{where } A = W^{(\cL_\M)} D^{(\cL_\M - 1)} W^{(\cL_\M - 1)} \cdots D^{(1)} W^{(1)},
\end{align*}

and each $D^{(\ell)}$ is a diagonal $\{0,1\}$ matrix selecting the active units. Therefore, when all keys fall into the same linear region (which occurs generically after a small perturbation), $\M$ is a single affine transformation.

Let $\mathbf{K} := [\vk_1\; \cdots\; \vk_m] \in \R^{d_k \times m}$ and $\mathbf{V} := [\vv_1\; \cdots\; \vv_m] \in \R^{d_v \times m}$. Exact memorization implies $A \mathbf{K} = \mathbf{V}$, so:

\begin{align*}
\rank(\mathbf{V}) \le \rank(A), \quad m = \rank(\mathbf{K}) \le \min\{\rank(A), d_k\}.
\end{align*}

Now observe:
\begin{align*}
A = W^{(\cL_\M)} \underbrace{D^{(\cL_\M-1)} W^{(\cL_\M -1)}}_{R_{\cL_\M -1}} \cdots \underbrace{D^{(1)} W^{(1)}}_{R_1},
\end{align*}
and thus the rank of $A$ is bounded by the minimal width encountered along each path times the immediate input dimension:
\begin{align*}
\rank(A) \le \sum_{i=1}^{\cL_\M} \left( \min_{j \ge i} d_h^{(j)} \right) d_h^{(i)} 
= \mathcal{O}\left( d_k d_v \sum_{i=1}^{\cL_\M} \min_{j \ge i} d_h^{(j)} d_h^{(i+1)} \right).
\end{align*}

Hence,
\begin{align*}
m \le \mathcal{O}\left( d_k d_v \sum_{i=1}^{\cL_\M} \min_{j \ge i} d_h^{(j)} d_h^{(i+1)} \right)
\end{align*}
\end{proof}
\PolyCapacity*
\begin{proof}
Let us begin by analyzing the dimension of the lifted feature space induced by $\phi_p$. A monomial in $d_k$ variables of total degree exactly $\ell$ has the form $\vk^\alpha = \prod_{j=1}^{d_k} k_j^{\alpha_j}$, where $\alpha \in \mathbb{N}^{d_k}$ and $|\alpha| := \sum_{j=1}^{d_k} \alpha_j = \ell$. The number of such monomials is given by the classical stars-and-bars formula, which counts the number of integer solutions to $\alpha_1 + \cdots + \alpha_{d_k} = \ell$, yielding
\begin{align*}
  \binom{d_k + \ell - 1}{\ell}.
\end{align*}
Summing over all degrees $\ell = 0$ to $p$ gives the total number of monomials (i.e., the output dimension of $\phi_p$),
\begin{align*}
  D = \sum_{\ell=0}^p \binom{d_k + \ell - 1}{\ell} = \binom{d_k + p}{p},
\end{align*}
where the final identity follows from the hockey-stick identity in combinatorics.

To characterize the memorization capacity, we reformulate the loss in matrix notation. Let $\Phi := [\phi_p(\vk_1)\;\cdots\;\phi_p(\vk_m)] \in \mathbb{R}^{D \times m}$ and $V := [\vv_1\;\cdots\;\vv_m] \in \mathbb{R}^{d_v \times m}$. Then the objective becomes
\begin{align*}
  L(\M) = \tfrac{1}{2} \|\M \Phi - V\|_2^2.
\end{align*}
Exact memorization corresponds to the existence of a matrix $\M$ such that $\M \Phi = V$. This is a linear system in which $\M$ acts on the columns of $\Phi$, so the rank of $\Phi$ necessarily limits the number of independent targets $\vv_i$ that can be fitted exactly.

By the sub-multiplicativity of rank, for any matrices $A$ and $B$, we have
\begin{align*}
  \operatorname{rank}(AB) \le \min\{\operatorname{rank}(A), \operatorname{rank}(B)\}.
\end{align*}
Applying this to $\M \Phi$ yields
\begin{align*}
  \operatorname{rank}(\M \Phi) \le \operatorname{rank}(\Phi) \le D.
\end{align*}
Now consider a case where the targets $\vv_1, \dots, \vv_m$ are linearly independent; for instance, take $V = [e_1, \dots, e_m]$, the first $m$ standard basis vectors. Then $\operatorname{rank}(V) = m$. If $m > D$, we necessarily have $\operatorname{rank}(\M \Phi) < \operatorname{rank}(V)$ for every choice of $\M$, implying that the system $\M \Phi = V$ is unsolvable. Hence, the loss remains strictly positive, and exact memorization is impossible.

This establishes that no method, regardless of optimization procedure, can memorize more than $D = \binom{d_k + p}{p}$ independent input-output pairs under a degree-$\le p$ polynomial lifting. Since $\binom{d_k + p}{p} = \Theta(d_k^p)$ for fixed $p$, the result follows: the memorization capacity is bounded above by $\mathcal{O}(d_k^p)$.
\end{proof}

\section{Detailed Formulations of All Architectures}\label{app:all-arch}
In this section, for the sake of clarity, we discuss the details of all architectures that we discuss through the paper:

\subsection{Deep Linear Attention (DLA)}
We design Deep Linear Attention (DLA)—linear attention module that uses a deep MLP as the memory (KV cache)—as one of the baselines of this study. Given input $\vx \in \R^{N \times  d_{\text{in}}}$, we project the input into matrices of keys, values and queries:
\begin{align}
    \mb{Q} = \begin{pmatrix} \vq_1 \\ \vdots \\ \vq_N
    \end{pmatrix}= \vx \mb{W}_{Q}, \qquad \quad \mb{K} = \begin{pmatrix} \vk_1 \\ \vdots \\ \vk_N
    \end{pmatrix} = \vx \mb{W}_{K}, \qquad \quad  \mb{V} = \begin{pmatrix} \vv_1 \\ \vdots \\ \vv_N
    \end{pmatrix} = \vx \mb{W}_{V},
\end{align}
where $\mb{W}_{Q}, \mb{W}_{K},$ and $\mb{W}_{V}$ are learnable linear layers. We then define memory as  a learning module that optimizes the inner-dot product similarity using gradient descent: i.e., 
\begin{align}
    \min_{\M} \undermath{\ell(\M_{t-1}; \vk_t, \vv_t)}{\inner{\M(\vk_t)}{\vv_t}}.
\end{align}
The above optimization using gradient descent results in the following recurrence (we also add weight decay with input-dependent parameter $\alpha_t$):
\begin{align}
    \M_{t} = \alpha_t \M_{t-1}  - \eta_t \nabla \ell(\M_{t-1}; \vk_t, \vv_t)
\end{align}
which in the case of linear memory (i.e., $\M_t = W_t \in \R^{d \times d}$) it becomes:
\begin{align}
    W_{t} = \alpha_t W_{t-1} + \vv_t \vk_t^{\top},
\end{align}
which is the formulation of gated linear attention. We use the same training process as other models (see \autoref{sec:parallelization}).

\subsection{Sliding Window Linear Attention (SWLA)}
The design of SWLA is the same as the design of DLA, but with the use of sliding window objective. That is, given keys, values, and queries:
\begin{align}
    \mb{Q} = \begin{pmatrix} \vq_1 \\ \vdots \\ \vq_N
    \end{pmatrix}= \vx \mb{W}_{Q}, \qquad \quad \mb{K} = \begin{pmatrix} \vk_1 \\ \vdots \\ \vk_N
    \end{pmatrix} = \vx \mb{W}_{K}, \qquad \quad  \mb{V} = \begin{pmatrix} \vv_1 \\ \vdots \\ \vv_N
    \end{pmatrix} = \vx \mb{W}_{V},
\end{align}
we optimize the internal objective of:
\begin{align}
    \min_{\M} \undermath{\ell(\M_{t-1}; \vk_t, \vv_t)}{\sum_{i = t - c + 1}^{t}\inner{\M_{t-1}(\vk_i)}{\vv_i}}.
\end{align}
The above formulation, results in:
\begin{align}
     \M_{t} = \alpha_t \M_{t-1}  - \nabla \ell(\M_{t-1}; \vk_t, \vv_t) = \alpha_t \M_{t-1}  - \sum_{i = t - c + 1}^{t}  \eta^{(t)}_i \nabla \inner{\M_{t-1}(\vk_i)}{\vv_i},
\end{align}
which in the case of linear memory (i.e., $\M_t = W_t \in \R^{d \times d}$) it becomes:
\begin{align}
    \M_{t} = \alpha_t \M_{t-1}  - \sum_{i = t - c + 1}^{t} \eta^{(t)}_i  \vv_i \vk_i^{\top}.
\end{align}

\subsection{\omodel}
In the design of \omodel, we use replace the dot-prodcut similarity objective with $\ell(\M_{t-1}; \vk_t, \vv_t) = \sum_{i = t - c + 1}^{t} \| \M_{t-1}(\phi(\vk_i)) - \vv_i \|^{2}_2$ ,which results in the recurrence of:
\begin{align}
     \M_{t} = \alpha_t \M_{t-1}  - \nabla \ell(\M_{t-1}; \vk_t, \vv_t) = \alpha_t \M_{t-1}  - \sum_{i = t - c + 1}^{t}  \eta^{(t)}_i \nabla \| \M_{t-1}(\phi(\vk_i)) - \vv_i \|^{2}_2.
\end{align}
In the above formulation, $\phi(.)$ is the polynomial feature mapping function.

\subsection{\model}
In the \model, we use the same internal objective as \omodel{} but we optimize it using Muon optimizer~\citep{jordan2024muon} with weight decay. That is, 
\begin{align}
    \M_{t} &= \alpha_t\M_{t-1} + \texttt{Newton-schulz5}(\SSS_t)\\
    \SSS_t &= \theta_t \SSS_{t-1} - \sum_{i = t - c + 1}^{t}  \eta^{(t)}_i \nabla \| \M_{t-1}(\phi(\vk_i)) - \vv_i \|^{2}_2.
\end{align}

\section{Experimental Details} \label{app:exp-details}
In our experimental setup we follow recent studies on linear recurrent models~\citep{yang2024gated, behrouz2024titans, behrouz2025Miras}, we use Wikitext~\citep{merity2017pointer}, LMB~\citep{paperno-etal-2016-lambada}, PIQA~\citep{bisk2020piqa}, HellaSwag~\citep{zellers-etal-2019-hellaswag}, WinoGrande~\citep{sakaguchi2021winogrande},  ARC-easy (ARC-e) and ARC-challenge (ARC-c)~\citep{clark2018think}, SIQA~\citep{sap-etal-2019-social}, and BoolQ~\citep{clark-etal-2019-boolq}. Also, the baselines results are from \citet{behrouz2025Miras, behrouz2024titans}. In the training, we use T5 tokenizer with a vocabulary size of 32K and use training length of 4K tokens (2K for SWA). We employ AdamW optimizer with learning rate of $4e$-$4$ with cosine annealing schedule with batch size of 0.5M tokens, and weight decay of $0.1$. The architectural details are also reported in \autoref{tab:exp-details}. The baseline results for 1.3B are from \citet{yang2024gated} and for 760M are from
\citet{behrouz2024titans, behrouz2025Miras}.

For the memory architecture, unless state otherwise, we use an MLP with $2$ layers with expansion factor of 4 and GELU activation function~\citep{hendrycks2016gaussian}. We also use residual connections and layer norm at the end of each chunk: $\M(x) = x + W_1 \sigma(W_2 x)$.

\begin{table*}
    \centering
    \caption{Architectural Details.}
    \label{tab:exp-details}
    \resizebox{0.5\linewidth}{!}{
    \begin{tabular}{c c c c c c}
    \toprule
         Model    & Block & Dim & Head & Peak LR & Token\\
         \midrule
         \midrule
        170M & 12 & 768 & 16 & 3e-3 & 15B\\
        340M & 24 & 1024 & 16 & 1.5e-3 & 15B\\
        760M & 24 & 1536 & 16 & 1.25e-3 & 30B\\
        1.3B & 18 & 2048 & 8 & 7e-4 & 100B \\
    \toprule
    \end{tabular}
    }
\end{table*}

\section{Additional Experimental Results} \label{app:exp}
In this section,  we provide additional experimental results to support the design of our models, understand the effect of different components and also evaluate their performance in long context, in-context recall and MAD tasks.

\subsection{Language Modeling and Common-sense Reasoning (Small Scale)}
In \autoref{sec:experiments} we presented a subset of results on language modeling and common-sense reasoning tasks. In this section, we further report the results for all scales of models. The results are in \autoref{tab:lm_results_full}. 

\head{State-of-the-art Results} 
Looking at the performance of \model{} and \omodel, both architectures perform favorably compared to modern linear recurrent models and Transformers, achieving lower perplexity and better accuracy in downstream tasks. Even the fully recurrent version of these models outperform hybrid models such as Samba~\citep{ren2024samba} and Gated DeltaNet-H2~\citep{yang2024gated}. Using the hybrid variants of MAG and MAL further improve the performance of \model, which shows the complementary role of recurrent long-term memory and attention.

\head{The Effect of Design}
Comparing the performance of \model, \omodel, and baselines SWLA and DLA, we can see the role of $\ell_2$ regression loss as the attentional bias. Also, the better performance of SWLA compared to GLA and RetNet indicates the importance of memorizing the context, instead of memorizing individual tokens.

\begin{table*}[t!]
\centering
\caption{
Performance of \model{} and baselines on language modeling and common-sense reasoning tasks. The best results are highlighted \colorbox{myblue}{highlighted}. 
}\label{tab:lm_results_full}
\centering
\resizebox{0.9\linewidth}{!}{
\centering
\begin{tabular}{l|c c|c c c c c c c c c}
\toprule
\textbf{Model}  & \textbf{Wiki.}  &  \textbf{LMB.} &  \textbf{LMB.} & \textbf{PIQA} &    \textbf{Hella.} & \textbf{Wino.} & \textbf{ARC-e} &  \textbf{ARC-c} &  \textbf{SIQA}  & \textbf{BoolQ} &  \textbf{Avg.} \\
 & ppl $\downarrow$  &  ppl $\downarrow$  &  acc $\uparrow$  & acc $\uparrow$ &   acc\_n $\uparrow$  & acc $\uparrow$  & acc $\uparrow$ & acc\_n $\uparrow$ &  acc $\uparrow$  & acc $\uparrow$ &   $\uparrow$  \\
\midrule
\midrule
\multicolumn{12}{c}{340M params / 15B tokens} \\
\midrule
 Transformer++ & 31.52 & 41.08 &  30.76 & 62.98  &  34.76 & 50.53  & 45.21  & 24.05 & 36.81 & 58.24 & 42.92\\
 RetNet & 32.50 & 49.73 & 28.24 & 62.61 & 34.15 &  50.91 & 44.27 & 23.62 & 36.79 & 59.72 & 42.54\\
 GLA & 28.51 & 43.02 & 28.73 & 64.05 & 35.96 & 50.00 & 54.19 & 24.29 & 37.13 & 58.39 & 44.09\\
 Mamba & 30.83 & 40.21 & 29.94 & 63.79 & 35.88 & 49.82 & 49.24 & 24.56 &  35.41  & 60.07 & 43.59\\
 DeltaNet & 28.65 & 47.30 & 28.43 & 63.52 & 35.95 & 49.63 & 52.68 & 25.37 &   {37.96}  &  58.79  & 44.04 \\
 TTT & 27.44 & 34.19 & 30.06 & 63.97  & 35.71 & 50.08 & 53.01 & 26.11 & 37.32 & 59.83 & 44.51\\
 Gated DeltaNet & 27.01 & 30.94 &  34.11 & 63.08 & 38.12  &   51.60  &  55.28  &  26.77  & 34.89 & 59.54  & 45.42\\
\textsc{Moneta} & 26.19 &  29.31 &   35.70  & 63.99  & 39.23  & 52.04  &  55.96  &  27.15 & 37.29 &   60.22 &   46.44 \\
\textsc{Yaad} & 26.61 &   29.11 &  34.09  & 64.93  &  39.86  & 51.12  & 54.75  &  28.64 & 33.82 & 60.29 & 45.93 \\
\textsc{Memora} & 27.16 &  30.44 &  33.68  &  65.21  & 39.17  & 51.23  & 53.40  & 27.99 & 34.1& 59.29 & 45.51 \\
\midrule
DLA (ours) & 27.93 & 35.09 & 30.8 & 62.9 & 36.2  & 50.4 & 53.5 & 26.7 & 37.1 & 59.7 & 44.76 \\
SWDT (ours) & 26.98 & 33.95 & 32.4 & 63.1 & 38.2 & 50.9 & 54.9 & 25.9 & 37.5 & 59.6 & 45.31\\
\midrule
\omodel{} (ours) & 26.03 & 28.76  & 35.6 & 65.3 & 39.7 & 52.0 & 56.1 & 28.6 & 37.7 & 60.4 & 46.93\\
\model{} (ours) & 25.88  & 28.54  & 36.1 & 64.9 & 40.1 & 52.7 & 56.4 & 28.8 & 38.1 & 61.2 & 47.28\\
\bottomrule
\end{tabular}
}
\end{table*}

%%%%%%%%%%%%%%%%%%%%%%%%%%%%%%%%%%%%%%%%%%%%%%%%%%%%%%%%%%%%

\end{document}